\documentclass{article}

\usepackage{amsmath,amsthm,amscd,amssymb,amsfonts}
\usepackage{mathtools} 
\usepackage{tikz} 
\usepackage{tikz-cd} 
\usepackage[normalem]{ulem}
\usepackage{float}
\usepackage{graphicx} 
\usepackage{longtable}
\usepackage{enumerate} 
\usepackage[hidelinks]{hyperref} 
\usepackage{xcolor, soul} 
\usepackage{adjustbox} 
\usepackage{multicol} 
\usepackage{array} 
\usepackage{ulem} 
\usepackage{adjustbox} 
\usepackage[table]{xcolor}
\usepackage{algorithm}
\usepackage{algorithmic}
\usepackage{amsmath, amssymb}
\usepackage{tcolorbox}
\usepackage{booktabs}
\usepackage{caption}
\usepackage{geometry}
\usepackage{tabularx}
\usepackage{multirow}
\usepackage{ulem}
\usepackage{booktabs}

\newtheorem{theorem}{Theorem}[section]
\newtheorem{proposition}[theorem]{Proposition}

\theoremstyle{definition}
\newtheorem{definition}[theorem]{Definition}

\theoremstyle{remark}
\newtheorem{remark}[theorem]{Remark}

\usepackage{titling}



\title{Target-Aware State-Adaptive \(p\)-Dirichlet Graph Neural Regression for Non-Invasive Body-Composition Estimation}
\author{
Nadejda Drenska$^{1,\dagger,*}$,
Matthew Lemoine$^{1,\dagger}$,
Gowri Priya Sunkara$^{1,\dagger}$,
Yu Wang$^{1,\dagger,*}$,\\
Sri Lakshmi Sravani Devarakonda$^{2}$,
and Steven B. Heymsfield$^{2,*}$
}
\date{}

\newcommand{\R}{\mathbb{R}}
\newcommand{\E}{\mathcal{E}}
\newcommand{\ip}[2]{\left\langle #1,#2\right\rangle_F}

\begin{document}

\maketitle

\vspace{-1.5em}
\begin{center}
\small
$^{1}$Department of Mathematics, Louisiana State University,
Baton Rouge, LA 70803, USA.\\
$^{2}$Pennington Biomedical Research Center(PBRC),
Baton Rouge, LA 70808, USA.\\[0.4em]
$^{\dagger}$These authors contributed equally to this work and are
listed alphabetically.\\
$^{*}$Correspondence:
\href{mailto:ndrenska@lsu.edu}{ndrenska@lsu.edu} (N.D.),
\href{mailto:yuwang@lsu.edu}{yuwang@lsu.edu} (Y.W.), and
\href{mailto:steven.heymsfield@pbrc.edu}
{steven.heymsfield@pbrc.edu} (S.B.H.).
\end{center}
\normalsize

\begin{abstract}

Accurate estimation of body-composition outcomes, including body fat percentage (BFP), bone mineral density (BMD), and appendicular lean mass (ALM), is important for evaluating metabolic, skeletal, and muscular health. However, direct assessment using dual-energy X-ray absorptiometry (DXA) requires specialized equipment and involves ionizing radiation. We propose a target-aware, state-adaptive $p$-Dirichlet energy-flow graph neural regression ($p$SADE-GNR) framework for estimating these outcomes from non-invasive anthropometric measurements. A neural encoder maps participant representations to hidden states, which are propagated over an outcome-specific participant-similarity graph using a state-adaptive forward-Euler discretization of the graph $p$-Dirichlet energy flow. Residual averaging and a linear regression layer then produce the predictions. For target-aware variants, the graph distance weights each original or latent coordinate by its normalized absolute training-fold correlation with the outcome.
Using clinical data from the Pennington Biomedical Research Center and five-fold cross-validation, the correlation-weighted model constructed from the original standardized measurements achieves the lowest root mean squared error among the evaluated configurations in all nine primary outcome–cohort combinations. It also obtained lower error than previously reported external support vector regression or least-squares support vector regression reference values in eight of nine comparisons. Autoencoder, variational-autoencoder, and Gaussian-mixture variational-autoencoder representations generally did not improve prediction of the primary outcomes or reduce computational cost. However, the correlation-weighted GMVAE model achieved the lowest mean error in all three cohorts in an exploratory age-prediction setting that included ALM, BMD, and BFP as predictors. These findings support target-aware, state-adaptive $p$-Dirichlet energy-flow graph neural regression for non-invasive body-composition estimation.

\end{abstract}

\section{Introduction}\label{sec:intro}
Appendicular Lean Mass (ALM), Body Fat Percentage (BFP), and Bone Mineral
Density (BMD) describe complementary aspects of muscular, adipose, and
skeletal status. However, their clinical and research utility depends on the accuracy with which they are estimated. A prediction model may reproduce the overall population trend
while still making absolute errors that obscure a modest longitudinal change
or alter the interpretation of a value near a population- or clinically
relevant reference boundary. For this reason, body-composition estimation
should be evaluated not only through association measures, but also through
errors expressed in the original units: kilograms for ALM, percentage points
for BFP, and $\mathrm{g/cm^2}$ for BMD. Performance should also be examined
across population subgroups, because a low aggregate error can conceal
systematic differences between groups. The estimates considered here are
intended to support quantitative assessment and research; they are not
presented as standalone diagnoses or replacements for clinical judgment
\cite{rheumatoid_arthritis,HIND2018429,osteoporosis}.

Dual-energy X-ray absorptiometry (DXA) is widely used to obtain reference
measurements of lean tissue, fat tissue, and bone-related measurements, but it
requires dedicated equipment, trained personnel, and an in-person acquisition
protocol \cite{cdc_dexa,HIND2018429}. These requirements motivate the study of
lower-burden predictors, including anthropometric and demographic variables,
bioimpedance measurements, photographs, and three-dimensional (3D) optical
body scans. Machine-learning models based on compact anthropometric, demographic,
bioimpedance, clinical, or genomic variables have been developed for lean
mass, ALM, BFP, and BMD prediction
\cite{birk2025equations,cichosz2021precise,fan2022bodyfat,ALM_prediction_paper,shioji2017bmd,ucar2021bfp,wu2021bmd}. A complementary literature uses
photographs and 3D body shape to estimate total, regional, and
outcome-specific body composition
\cite{feng2026voxel,marazzato2024alm,pleuss2019machinelearning,qiao2024prediction,
tian2025cnn}. Collectively, these studies establish that
accessible measurements and external body shape contain substantial
predictive information. However, most existing approaches either process each
participant as an independent feature vector or learn directly from
high-dimensional images, meshes, or voxel representations. They do not
resolve a central question in our setting: how should a compact anthropometric
profile define similarity between participants when the prediction target
changes?

A participant-similarity graph provides a natural framework for this
question: participants are represented as nodes, and edges encode proximity
in the measurement space. Such models are motivated by the smoothness
assumption that participants who are similar in outcome-relevant
characteristics tend to exhibit similar responses \cite{pgnn, pai2018patient}. The
\(p\)-Dirichlet energy provides a mechanism for exploiting this assumption:
its gradient flow reduces weighted differences between the learned
representations of connected participants, thereby propagating information
over the similarity graph.

The graph is not observed in advance, so its geometry is a modeling decision
rather than a fixed input. Moreover, the same pair of participants may be
highly similar for predicting BFP but less informative for predicting ALM or
BMD. We therefore construct a separate graph for each target by weighting
each anthropometric coordinate according to the magnitude of its correlation
with that target. All correlation weights are estimated exclusively from the
training portion of the current cross-validation fold, preventing held-out
target values from entering graph construction. The resulting target-aware
graph is coupled with the proposed \textbf{State-Adaptive \(p\)-Dirichlet
Energy-Flow Graph Neural Regression} (\(p\)SADE-GNR) model, which performs
nonlinear graph propagation for continuous participant-level regression.

Using data from the Pennington Biomedical Research Center, we compare standard
Euclidean and target-aware correlation-weighted graphs constructed from the
original standardized measurements and from AE, VAE, or GMVAE latent
representations. We evaluate ALM, BFP, and BMD prediction separately in male,
female, and combined-sex cohorts under five-fold cross-validation. We additionally consider biological-age regression\footnote{Throughout this
paper, unless otherwise stated, the term ``age'' refers to biological age
rather than chronological age.} as an exploratory task to examine whether the
same anthropometric measurements capture age-associated biological variation. The correlation-weighted $p$SADE-GNR applied to the original
measurements achieves the lowest RMSE among the evaluated configurations in
all nine primary outcome--cohort comparisons. It also obtains numerically lower
RMSE than the previously reported SVR or LSSVR benchmark in eight of the nine
comparisons, with the largest improvements for ALM and BFP; BMD performance is
more comparable with the benchmark. The AE, VAE, and GMVAE representations do
not improve the primary prediction results and are therefore treated as
representation-learning ablations rather than as central components of the
proposed method.

\subsection*{Summary of Contributions}

The main contributions of this work are summarized as follows:

\begin{enumerate}
   \item \textbf{Target-aware, state-adaptive \(p\)-Dirichlet energy-flow
graph neural regression.}
We formulate a participant-level graph neural regression framework based on the p-Dirichlet energy flow for estimating
Appendicular Lean Mass (ALM), Bone Mineral Density (BMD), and Body Fat
Percentage (BFP) from non-invasive anthropometric measurements. For each
outcome, anthropometric variables are weighted by the magnitudes of their
training-fold correlations with that outcome, producing a leakage-controlled,
outcome-specific participant-similarity graph. During propagation, these fixed
target-aware graph weights are modulated by a normalized \(p\)-dependent factor
derived from the current hidden-state differences, yielding state-adaptive
effective edge weights. The resulting framework separates outcome-specific
graph geometry from nonlinear, state-dependent information propagation without
using held-out target values.

    \item \textbf{Comparison of raw and learned graph representations.}
    We compare participant graphs constructed from standardized anthropometric measurements with graphs constructed from AE, VAE, and GMVAE representations under a common evaluation protocol. For the primary ALM, BMD, and BFP prediction tasks, the target-aware raw-feature state-adaptive \(p\)-Dirichlet Energy Flow model ($p$DEF) provides the strongest overall performance. In contrast, for the exploratory age-prediction task including ALM, BMD, and BFP as predictors, the GMVAE-cc-\(L^2\)--\(p\)SADE-GNR achieves the lowest mean RMSE in the male, female, and combined cohorts. This result shows that learned mixture representations are not uniformly beneficial, but can provide an advantage when modeling age-related variation from combined anthropometric and body-composition information. UMAP is additionally used as an exploratory visualization of target-associated organization in the original and learned feature spaces.

    \item \textbf{Evaluation across multiple outcomes and population groups.}
    We evaluate the proposed models for ALM, BMD, and BFP prediction in male,
    female, and combined-sex cohorts, with Biological Age prediction considered as an
    additional exploratory task. The experiments show that the
    correlation-weighted $p$DEF constructed from the original anthropometric
    measurements provides the strongest overall performance, with especially
    substantial improvements for ALM and BFP compared to the previously
    best reported benchmarks in \cite{gyaneshwar_et_al}. These results also
    demonstrate that increasing the complexity of the latent representation
    does not necessarily improve graph-based regression.
\end{enumerate}

\section{Related Work}\label{sec:related_work}

\subsection{Closely Related Body-Composition Prediction}

Machine-learning models have estimated body composition from both compact
measurements and digital body shape. Among compact-input approaches, Cichosz
et al. predicted total lean and fat mass from anthropometric and demographic
variables, while Birk et al. developed machine-learning equations for several
DXA-derived outcomes from anthropometry and bioimpedance
\cite{birk2025equations,cichosz2021precise}.

Research on digital anthropometry has moved from estimating body
composition with 3D surface scans and engineered shape descriptors
\cite{harty2020bodyfat,ng2016clinical,ng2019shapeup,pleuss2019machinelearning} to models based on conventional photographs, smartphone
images, and device-agnostic 3D representations
\cite{majmudar2022smartphone,mccarthy2023smartphone,tian2020photography,tian2022deviceagnostic}. This literature has also
examined the stability of 3D optical estimates across age, body-mass index, and
ethnicity and their ability to monitor longitudinal change
\cite{wong2023monitoring,wong2023accuracy}. More recent work has developed
outcome-specific neural models, whole-body deep regression, generative
representations, and voxel-based predictors
\cite{bennett2024trunk,feng2026voxel,feng2025enhanced,leong2024generative,marazzato2024alm,qiao2024prediction,tian2025cnn}. These studies establish
that external body shape contains substantial predictive information, but
their imaging inputs, outcomes, cohorts, and validation protocols differ from
our compact anthropometric setting; their reported errors are therefore not
treated as directly comparable benchmarks.

The closest empirical predecessor is Agrahari et al.~\cite{gyaneshwar_et_al}.
That study uses Pennington anthropometric biomarkers
to predict the same three primary outcomes---ALM, BFP, and BMD---and compares
standard supervised regressors, including SVR and LSSVR, with a
game-theoretic $p$-Laplacian method for a limited-label semi-supervised
setting. Its similarity construction assigns equal weight to all biomarkers
and identifies more principled weighting as a direction for further work. Our
study addresses this specific gap by constructing a distinct, training-only
correlation-weighted metric for each target and using it in a supervised $p$SADE-GNR.
Accordingly, the published SVR/LSSVR results provide the most relevant
external benchmarks for our experiments.

In addition, anthropometric measurements have also been used to characterize age-related variation. Tian et al.~\cite{age_pred} analyzed 171 three-dimensional anthropometric measurements from 302 participants in the Baltimore Longitudinal Study of Aging, whose mean age was $(71.7\pm13.4)$ years. Their LASSO model predicted chronological age with a mean absolute error of (4.5) years, and the resulting anthropometric age gap was associated with physical function and lower-extremity body mass. In comparison, our cohort is larger and substantially younger $((n=515), (27.6\pm19.9) years)$, and model performance is reported using root mean squared error rather than mean absolute error. Because the cohorts, prediction settings, and evaluation metrics differ, the numerical errors are not directly comparable; nevertheless, Tian et al. provide an important reference for anthropometry-based age prediction and motivate further development of age-related prediction models.

\subsection{Participant-Similarity and \texorpdfstring{$p$}{p}-Laplacian Learning}

Patient-similarity networks and biomedical population graphs represent
individuals as nodes and encode phenotypic, clinical, or imaging relationships
through edges \cite{kazi2019inceptiongcn,pai2018patient,parisot2018disease}.
These studies motivate relational modeling, but they also highlight a key
challenge when edges are not naturally observed: predictive performance can
depend strongly on how similarity is defined. In our problem, graph
construction is therefore part of the statistical formulation, and a single
fixed metric need not be appropriate for all body-composition targets.

Graph-based learning commonly imposes smoothness over a similarity graph
\cite{belkin2006manifold,zhou2004localglobal,zhu2003gaussian}. Graph
$p$-Laplacians extend the quadratic $p=2$ energy to nonlinear $p$-energies and
have been studied in image and data processing, clustering, semi-supervised
learning, and game-theoretic learning
\cite{buhler2009graphplaplacian,calder2019gameplaplacian,
CalderDrenska,elalaoui2016lp,elmoataz2015graphplaplacian,slepcev2019plaplacian}. Fu et al. incorporated this nonlinear
diffusion principle into a trainable graph neural network \cite{pgnn}. In contrast to Fu et al's work, we developed a distinct diffusion formula directly based on the $p$-Dirichlet energy flow though we have a similar model architecture. The methodological
contribution of the present work is outcome-specific
participant graph and its evaluation for continuous body-composition
regression.

\section{Background}

This section summarizes the two methodological components used in our modeling
framework. We first introduce graph $p$-Dirichlet energy and its gradient flow, which provide the
basis for participant-to-participant information propagation, and then review
deterministic and variational autoencoders, including the Gaussian-mixture
extension, which provide alternative low-dimensional representations for graph
constructions. The
study-specific graph construction, regression architecture, and validation
protocol are described in Section~\ref{sec:methods}.

\subsection{Graph \ensuremath{p}-Dirichlet Energy and Gradient Flow}
\label{sec:pgnn_background}

Neural graph models propagate and aggregate hidden representations along graph
edges \cite{gilmer2017mpnn,kipf2017gcn,scarselli2009gnn,
wu2021gnnsurvey}. In the participant graph considered here, vertices correspond
to samples and weighted edges encode participant similarity. For a continuous
regression target, the relevant notion is \emph{target smoothness}: a graph is
more target-smooth when vertices joined by high-weight edges tend to have
similar response values. We do not assume that every participant graph is
uniformly target-smooth; the degree of local target smoothness may vary across
outcomes, cohorts, and graph-construction procedures.

Graph $p$-energies generalize the quadratic smoothness energy associated with
the ordinary graph Laplacian and have been used in graph-based clustering,
semi-supervised learning, and nonlinear graph diffusion
\cite{buhler2009graphplaplacian,pgnn}. Fu et al.~\cite{pgnn} constructed a
trainable $p$-Laplacian-based graph neural network using adaptive nonlinear
graph filters. The propagation rule used in the present study is related to
this general nonlinear-diffusion principle, but it is not the full
$p$-Laplacian GNN architecture of Fu et al. Instead, it is an explicit
discretization of the gradient flow generated by the graph $p$-Dirichlet
energy.

Let $G=(V,E,W)$ be an undirected weighted graph, where
$W=(w_{ij})$ is symmetric and nonnegative. For a vector-valued graph signal
$u:V\to\mathbb{R}^r$, the graph $p$-Dirichlet energy is
\[
\mathcal{E}_p(u)
=
\frac{1}{p}
\sum_{\{i,j\}\in E}
w_{ij}\lVert u_i-u_j\rVert_2^p,
\qquad p>1.
\]
This energy measures the variation of $u$ across graph edges: it is small when
strongly connected vertices have similar representations.

The negative-gradient flow of $\mathcal{E}_p$ is
\[
\frac{d u_i(s)}{ds}
=
-\nabla_{u_i}\mathcal{E}_p(u(s))
=
-\sum_{j\in\mathcal{N}(i)}
w_{ij}
\lVert u_i(s)-u_j(s)\rVert_2^{p-2}
\bigl(u_i(s)-u_j(s)\bigr),
\]
where the expression is defined as zero when $u_i=u_j$. The gradient
$\nabla\mathcal{E}_p$ is conventionally called the variational graph $p$-Laplacian.
When $p=2$, this is the ordinary linear graph heat flow; when $p\neq2$, the
flow is nonlinear.

Using an explicit Euler discretization with step size $\mu>0$ gives
\[
h_i^{(t+1)}
=
h_i^{(t)}
-
\mu
\sum_{j\in\mathcal{N}(i)}
w_{ij}
\lVert h_i^{(t)}-h_j^{(t)}\rVert_2^{p-2}
\bigl(h_i^{(t)}-h_j^{(t)}\bigr),
\qquad t=0,\ldots,T-1.
\]
Thus, the iterative graph-propagation rule used in this study is a
finite-step discretization of the graph $p$-Dirichlet energy flow.
The model-specific encoder, residual averaging, regression layer, and training
procedure are described in Section~\ref{subsec:pgnn_prediction_model}.

\begin{figure}[!t]
    \centering
    \includegraphics[width=\linewidth]
    {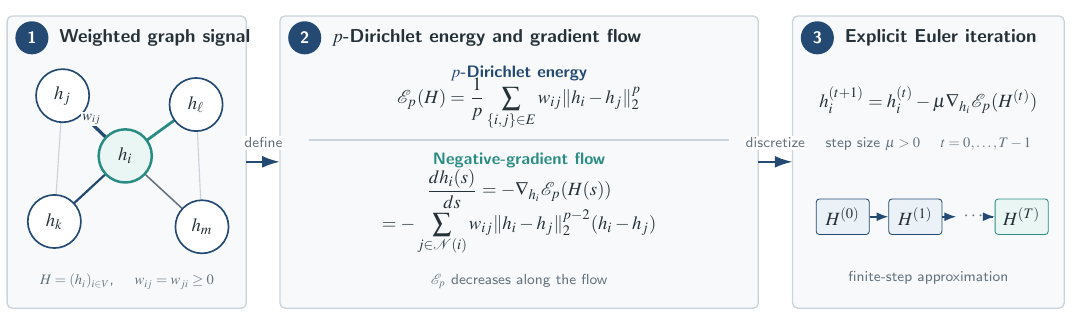}
    \caption{Schematic of the graph $p$-Dirichlet energy-flow module used in
    this study. At each iteration, the representation at vertex $i$ is updated
    by an explicit Euler step in the negative-gradient direction of the graph
    $p$-Dirichlet energy with a fixed step size $\mu$. After $T$ flow steps, the propagated representation
    is averaged with the initial representation and passed to a scalar linear
    regression layer where each $H^{(k)}=[h_i^{(k)}]_{i\in V}$.}
    \label{fig:pgnn_background}
\end{figure}

\subsection{Autoencoders, Variational Autoencoders, and Gaussian-Mixture Extensions} \label{sec:vae_background}

Anthropometric measurements can be strongly correlated and may contain
redundant as well as nonlinear variation. This motivates testing whether a
compact latent representation can facilitate participant-similarity graph
construction. Compression can also remove distinctions that are useful for a
particular prediction target, however, so the value of a latent representation
must be assessed empirically. A deterministic autoencoder learns an
encoder--decoder map through a lower-dimensional bottleneck by minimizing
reconstruction error, providing a standard nonlinear dimensionality-reduction
baseline \cite{hinton2006autoencoder}. We compare this baseline with variational
autoencoders (VAEs) \cite{doersch2021tutorial,kingma2014vae} and
Gaussian-mixture variational autoencoders (GMVAEs)
\cite{dilokthanakul2016gmvae}.

A VAE is a latent-variable generative model that represents an observed sample
$x\in\mathbb{R}^{d}$ using a lower-dimensional latent variable
$z\in\mathbb{R}^{m}$, typically with $m<d$. The model specifies a prior $p(z)$
and a decoder likelihood $p_\theta(x\mid z)$, giving the joint distribution
\[
p_\theta(x,z)=p_\theta(x\mid z)p(z).
\]
Because the posterior $p_\theta(z\mid x)$ is generally intractable, a
neural-network encoder $q_\phi(z\mid x)$ is introduced as a variational
approximation. The VAE is trained by maximizing the evidence lower bound
\[
\log p_\theta(x)
\geq
\mathcal{L}_{\mathrm{VAE}}(x)
=
\mathbb{E}_{q_\phi(z\mid x)}
\!\left[\log p_\theta(x\mid z)\right]
-
D_{\mathrm{KL}}
\!\left(q_\phi(z\mid x)\,\|\,p(z)\right).
\]
The first term of $\mathcal{L}_{\text{VAE}}$ is the expected \emph{reconstruction log-likelihood}: it rewards
the decoder for assigning high probability to the observed measurements $x$
after $z$ is sampled from the encoder distribution. Its negative is often
called the reconstruction loss. For a Gaussian decoder with fixed variance,
maximizing this term is equivalent, up to constants and scaling, to minimizing
a squared reconstruction error. The KL-divergence term regularizes the
approximate posterior toward the prior, promoting a structured latent space
from which samples can be decoded.

A standard VAE commonly uses the unimodal prior
$p(z)=\mathcal{N}(0,I)$. A single Gaussian may be restrictive when the data
distribution contains several modes. A GMVAE introduces a discrete latent
variable $c\in\{1,\ldots,K\}$ and a component-specific Gaussian prior for $z$,
\[
p_\theta(x,z,c)
=
p_\theta(x\mid z)p_\theta(z\mid c)p(c),
\]
with
\[
p_\theta(z\mid c=k)
=
\mathcal{N}(z\mid\mu_k,\Sigma_k),
\qquad k=1,\ldots,K.
\]
This mixture prior allows the latent distribution to represent multiple modes
with different centers and covariance structures rather than forcing all
observations toward one global Gaussian region. The resulting components are
statistical latent components, however, and should not automatically be
interpreted as clinically distinct or biologically verified subgroups without
independent evidence.

Posterior inference in a GMVAE involves both the continuous latent variable $z$
and the component variable $c$. A common variational factorization is
\[
q_\phi(z,c\mid x)
=
q_\phi(c\mid x)q_\phi(z\mid x,c),
\]
where $q_\phi(c=k\mid x)$ is the posterior probability assigned to component
$k$. The vector of these probabilities is called a \emph{soft assignment}
because an observation is not forced into exactly one component; instead, it
may receive nonzero probability under several components, with the
probabilities summing to one. These probabilities can also be used to combine
component-conditional latent means into a single posterior-expected
representation for downstream graph construction.

In this study, the VAE and GMVAE serve as representation-learning modules
rather than final prediction models. The VAE provides a continuous nonlinear
latent representation, while the GMVAE permits a multimodal latent distribution
and probabilistic component membership. Their learned representations are used
to construct alternative participant-similarity graphs for a downstream
$p$-Dirichlet energy flow graph neural model and are compared directly with graphs built from
the standardized anthropometric variables.

\section{Methodology}\label{sec:methods}

This section develops the proposed \textbf{State-Adaptive
\(p\)-Dirichlet Energy-Flow Graph Neural Regression} (\(p\)SADE-GNR)
framework for estimating ALM, BMD, and BFP from anthropometric measurements.
We adopt the graph-smoothness assumption that participants who are similar in
outcome-relevant characteristics should have similar response-related
representations and, consequently, similar predictions. On a weighted
participant graph, the \(p\)-Dirichlet energy measures violations of this
assumption by penalizing differences between connected hidden states in
proportion to their edge weights \cite{elmoataz2008nonlocal}. Reducing this energy therefore provides a
natural mechanism for propagating information among similar participants.
Rather than adding the energy directly to the regression loss, we use its
gradient flow to define graph propagation and introduce a state-adaptive
discretization whose effective edge weights vary with \(p\) and the current
hidden-state differences. Conditions guaranteeing energy decrease are
established below. We evaluate the method using original, AE, VAE, and GMVAE
representations with both unweighted and target-aware correlation-weighted
graphs; age is considered separately as an exploratory target.

\subsection{Study Variables and Prediction Tasks}
\label{subsec:data_tasks}

Let
\[
x_i\in\mathbb{R}^d
\]
denote the accessible input-feature vector for participant \(i\), consisting
of chronological age together with the non-invasive anthropometric
measurements where $d$ is the total number of these quantities. Let
\[
\tilde{x}_i\in\mathbb{R}^{d-1}
\]
denote the corresponding anthropometric feature vector obtained by
removing chronological age from \(x_i\). These measurements include height, weight, body
circumferences, limb and torso volumes, and regional and total
surface-area measurements. The three DXA-derived body-composition outcomes are denoted by
\[
b_i
=
\left(
y_i^{(\mathrm{ALM})},
y_i^{(\mathrm{BMD})},
y_i^{(\mathrm{BFP})}
\right)^\top.
\]
The complete variable list and corresponding units are provided in
Table~\ref{tab:biomarkers_table}. Separate analyses were performed for the
male, female, and combined-sex cohorts.

ALM, BMD, and BFP were treated as three separate single-target regression
tasks. Thus, a distinct target-specific participant graph and prediction model
were fitted for each
\[
q\in\{\mathrm{ALM},\mathrm{BMD},\mathrm{BFP}\},
\]
and the three outcomes were not predicted jointly through a common
multi-output loss.

\paragraph{Primary body-composition tasks.}
For each primary prediction task, the model used chronological age and
the non-invasive anthropometric measurements contained in \(x_i\).
All three DXA-derived body-composition variables were excluded from the
input, including the two body-composition outcomes that were not the
current prediction target. This design prevents the prediction of one
DXA-derived quantity from relying on other DXA-derived measurements
while allowing chronological age to serve as an accessible demographic
predictor. The same input-feature set was used across the three primary
tasks, permitting a consistent comparison of graph-construction and
representation-learning methods.


\paragraph{Exploratory Age task.} Age prediction was considered separately as a secondary exploratory analysis. Two predictor settings were evaluated. The anthropometry-only setting used \[ u_i^{(\mathrm{Age},-)}=\tilde{x}_i, \] whereas the augmented setting included the three measured body-composition outcomes, \[ u_i^{(\mathrm{Age},+)} = \left( \tilde{x}_i^\top, y_i^{(\mathrm{ALM})}, y_i^{(\mathrm{BMD})}, y_i^{(\mathrm{BFP})} \right)^\top. \] The first setting evaluates whether Age can be inferred from the non-invasive anthropometric measurements alone. The second assesses whether the measured body-composition variables provide additional information for Age prediction. Because the augmented setting uses DXA-derived quantities, it is reported only as an exploratory comparison and is not part of the paper's principal non-invasive body-composition estimation claim.




\subsection{Raw and Learned Feature Representations}\label{subsec:representations}

For a given fold, let $r_i\in\mathbb{R}^{m}$ denote the representation used to
construct the graph and $x_i^{(f)} \in \mathbb{R}^d$ be the standardization of raw data $x_i$. Four representation families are considered.

\paragraph{Standardized anthropometric representation.}
For the raw-feature models, $r_i=x_i^{(f)}$ and $m=d$.

\paragraph{Autoencoder representation.}
A deterministic encoder--decoder model was trained to reconstruct the
standardized anthropometric vector. The encoder output was used as the graph
representation,
\[
 r_i=E_{\phi}^{\mathrm{AE}}(x_i^{(f)}).
\]

\paragraph{VAE representation.}
The VAE encoder produced an approximate posterior
$q_{\phi}(z\mid x_i^{(f)})$. The deterministic posterior mean was used for
neighbor search and graph construction,
\[
 r_i=\mathbb{E}_{q_{\phi}(z\mid x_i^{(f)})}[z].
\]
The VAE was fitted within the training portion of each fold. Its objective can
be written as
\[
 \mathcal{L}_{\mathrm{VAE}}
 =
 \mathcal{L}_{\mathrm{rec}}
 +\beta_{\mathrm{KL}}
 D_{\mathrm{KL}}\!\left(q_{\phi}(z\mid x)\,\|\,p(z)\right),
\]
where $\beta_{\mathrm{KL}}$ denotes the KL regularization weight.

\paragraph{GMVAE representation.}
Let $c\in\{1,\ldots,C\}$ denote a mixture component. If
$q_{\phi}(c\mid x_i^{(f)})$ is the posterior component probability and
$\mu_{ic}$ is the component-conditional latent mean for participant $i$, the
representation used for graph construction was the posterior expectation
\[
 r_i
 =
 \sum_{c=1}^{C}q_{\phi}(c\mid x_i^{(f)})\,\mu_{ic}.
\]
The GMVAE was also fitted within each training split. When a supervised
regression term was included in its objective, that term was evaluated only on
training labels and its coefficient was selected inside the inner validation
procedure. 

\subsection{Participant-Similarity Graph Construction}\label{sec:pgnn}
All participants form the set $V$ of vertices where each participant is treated as vertex in $V$. We consider a complete graph $(V, E)$ over $V$ where $(i, j)\in E$ for all $i, j\in V$. A separate graph weight matrix will be constructed
for every target, cohort, representation family, and cross-validation training
split. Let $r_i=(r_{i1},\ldots,r_{im})$ be the selected representation.

\paragraph{Euclidean graph.}
The unweighted representation-space distance was
\[
 d_{f,\mathrm{E}}(i,j)
 =
 \left(\sum_{\ell=1}^{m}(r_{i\ell}-r_{j\ell})^2\right)^{1/2}.
\]

\paragraph{Target-aware correlation-weighted graph.}
For target $q$, feature or latent coordinate $\ell$, and training split
$\mathcal{T}_f$, define the training-only Pearson correlation
\[
 \rho_{f\ell}^{(q)}
 =
 \operatorname{Corr}_{i\in\mathcal{T}_f}
 \!\left(r_{i\ell},y_i^{(q)}\right).
\]
A nonnegative normalized importance weight was then formed as
\[
 a_{f\ell}^{(q)}
 =
 \frac{|\rho_{f\ell}^{(q)}|}
 {\sum_{s=1}^{m}|\rho_{fs}^{(q)}|+\varepsilon},
\]
and the correlation-weighted distance was
\[
 d_{f,\mathrm{cc}}^{(q)}(i,j)
 =
 \left[
 \sum_{\ell=1}^{m}
 a_{f\ell}^{(q)}(r_{i\ell}-r_{j\ell})^2
 \right]^{1/2}.
\]
For VAE and GMVAE graphs, the correlations were calculated between the latent
coordinates and the target, rather than reusing correlations from the original
anthropometric variables. Correlations calculated from the full dataset were
used only for descriptive figures such as the correlation clock and were not
used in held-out prediction.


\paragraph{Mutual $k$-nearest-neighbor topology and edge weights.}
For a chosen distance $d_f$, let $\mathcal{N}_{k,f}(i)$ be the set of the
$k$ nearest eligible training nodes to node $i$. The mutual-neighbor training
edge set was
\[
 (i,j)\in E_f
 \quad\Longleftrightarrow\quad
 j\in\mathcal{N}_{k,f}(i)
 \ \text{and}\
 i\in\mathcal{N}_{k,f}(j).
\]
Edge weights were assigned using the exponential kernel
\[
 w_{ij}^{(f)}
 =
 \exp\!\left[-\alpha_f d_f(i,j)\right],
 \qquad (i,j)\in E_f,
\]
where $\alpha_f>0$ was determined from the mean edge distance scale of the corresponding graph. The mutual $k$ nearest neighbor
adjacency matrix was symmetric and had zero diagonal before any self-loop or
normalization operation used by the $p$SADE-GNR implementation.

\paragraph{Theoretical motivation.}
Let \(R=(R_1,\ldots,R_m)^\top\) be a standardized participant profile,
\(\rho_\ell=\operatorname{Corr}(R_\ell,Y)\), and suppose that

$$
\mathbb{E}[Y\mid R=r]=\beta_0+\beta^\top r.
$$

For feature weights \(b\in\Delta_m=\{b_\ell\geq0:
\sum_{\ell=1}^m b_\ell=1\}\), define

\begin{equation}\label{eq:linear_expected_target}
d_b(r,r')
=
\left[\sum_{\ell=1}^m b_\ell(r_\ell-r_\ell')^2\right]^{1/2},
\qquad
K(b)
=
\sup_{\delta\ne0}
\frac{|\beta^\top\delta|}
{\left(\sum_{\ell=1}^m b_\ell\delta_\ell^2\right)^{1/2}},
\end{equation}

where \(K(b)=\infty\) if \(b_\ell=0\) for some \(\beta_\ell\ne0\).

\begin{proposition}[Correlation weighting in an idealized linear model]
\label{prop:target_aware_linear_guarantee}
Suppose \(\beta\ne0\) and
\(|\beta_\ell|=c|\rho_\ell|\) for some \(c>0\). Then

$$
K(b)^2
=
\sum_{\ell:\beta_\ell\ne0}\frac{\beta_\ell^2}{b_\ell},
$$

and its unique minimizer over \(\Delta_m\) is

$$
b_\ell^\star
=
\frac{|\beta_\ell|}{\|\beta\|_1}
=
\frac{|\rho_\ell|}{\|\rho\|_1},
\qquad
K(b^\star)=\|\beta\|_1.
$$

For equal weighting \(b_\ell^{\mathrm{Euc}}=1/m\),

\begin{equation}\label{eq:target_aware_not_worse}
    \frac{K(b^\star)}{K(b^{\mathrm{Euc}})}
    =
    \frac{\|\rho\|_1}{\sqrt m\,\|\rho\|_2}
    \leq1,
\end{equation}

with equality if and only if all features have equal absolute correlation
with the target.
\end{proposition}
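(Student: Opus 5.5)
The argument has three parts: compute $K(b)$ in closed form via Cauchy--Schwarz, minimize the resulting convex function over the simplex, and specialize to uniform weights.

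\textbf{Closed form for $K(b)$.} Let $S=\{\ell:\beta_\ell\ne0\}$. If some $\ell\in S$ has $b_\ell=0$, then $K(b)=\infty$ by convention, and the formula $\sum_{\ell\in S}\beta_\ell^2/b_\ell$ is also $+\infty$, so the two sides agree. Now assume $b_\ell>0$ for all $\ell\in S$. Taking $\delta_\ell=0$ for $\ell\notin S$ can only shrink the denominator without changing the numerator, so we may restrict to $\delta$ supported on $S$. For such $\delta$, write
\[
\beta^\top\delta=\sum_{\ell\in S}\frac{\beta_\ell}{\sqrt{b_\ell}}\cdot\sqrt{b_\ell}\,\delta_\ell .
\]
Cauchy--Schwarz then gives
\[
|\beta^\top\delta|^2\le\Bigl(\sum_{\ell\in S}\beta_\ell^2/b_\ell\Bigr)\Bigl(\sum_\ell b_\ell\delta_\ell^2\Bigr).
\]
Equality holds at $\delta_\ell=\beta_\ell/b_\ell$ for $\ell\in S$ and $\delta_\ell=0$ otherwise. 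Hence the supremum is attained and equals the stated expression.

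\textbf{Minimizing over the simplex.} We minimize $F(b)=\sum_{\ell\in S}\beta_\ell^2/b_\ell$ over $\Delta_m$. First, any minimizer puts zero weight outside $S$: moving that mass onto $S$ strictly decreases $F$, because $F$ is strictly decreasing in each $b_\ell$ with $\ell\in S$. On $S$, apply Cauchy--Schwarz again:
\[
\Bigl(\sum_{\ell\in S}|\beta_\ell|\Bigr)^2\le\Bigl(\sum_{\ell\in S}\beta_\ell^2/b_\ell\Bigr)\Bigl(\sum_{\ell\in S}b_\ell\Bigr)\le F(b).
\]
So $F(b)\ge\|\beta\|_1^2$. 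Equality in the first inequality requires $b_\ell\propto|\beta_\ell|$ on $S$, and equality in the second requires $\sum_{\ell\in S}b_\ell=1$. Together these force $b_\ell^\star=|\beta_\ell|/\|\beta\|_1$, which proves both uniqueness and $K(b^\star)=\|\beta\|_1$. Alternatively, uniqueness follows from strict convexity of $F$ on the relevant face. Finally, the hypothesis $|\beta_\ell|=c|\rho_\ell|$ makes the constant $c$ cancel, giving $b^\star_\ell=|\rho_\ell|/\|\rho\|_1$.

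\textbf{Uniform weights.} With $b_\ell=1/m$ we get $K(b^{\mathrm{Euc}})^2=m\|\beta\|_2^2$. Therefore
\[
\frac{K(b^\star)}{K(b^{\mathrm{Euc}})}=\frac{\|\beta\|_1}{\sqrt m\,\|\beta\|_2}=\frac{\|\rho\|_1}{\sqrt m\,\|\rho\|_2},
\]
again because $c$ cancels. The bound $\le1$ is the standard inequality $\|\rho\|_1\le\sqrt m\|\rho\|_2$, which is Cauchy--Schwarz against the all-ones vector. Equality holds exactly when $|\rho|$ is proportional to the ones vector, that is, when all $|\rho_\ell|$ are equal. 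Note $\rho\ne0$ since $\beta\ne0$.

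\textbf{Main obstacle.} The only delicate point is bookkeeping for features with $\beta_\ell=0$. These contribute no term to $K(b)^2$, yet they still have to be excluded from the minimizer's support for uniqueness to hold. The argument above handles this by observing that mass placed outside $S$ is wasted. Under the stated equality case, the uniform-weight comparison remains valid even when some $\rho_\ell=0$.
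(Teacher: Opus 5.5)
Your proposal is correct and follows essentially the same route as the paper: Cauchy--Schwarz with weights $b_\ell$ gives $K(b)^2=\sum_{\ell\in S}\beta_\ell^2/b_\ell$, with equality at $\delta_\ell=\beta_\ell/b_\ell$ on $S$; a second Cauchy--Schwarz chain $\|\beta\|_1^2\le F(b)\sum_{\ell\in S}b_\ell\le F(b)$ has equality cases that force $b^\star$; and $\|\rho\|_1\le\sqrt m\,\|\rho\|_2$ handles the uniform-weight comparison. Your separate argument that mass outside $S$ is wasted is harmless but redundant, since the equality case $\sum_{\ell\in S}b_\ell=1$ in the second chain already forces all weight onto $S$.
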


Because

$$
|\,\mathbb{E}[Y\mid R=r]-\mathbb{E}[Y\mid R=r']\,|
\leq K(b)d_b(r,r'),
$$

In this simplified linear setting, Proposition~\ref{prop:target_aware_linear_guarantee}
shows that weighting each measurement by its absolute correlation with the
target gives the smallest worst-case bound on the expected-outcome difference
between nearby participants. The required condition holds automatically when
the standardized measurements are uncorrelated. In real clinical data,
however, measurements may be correlated, and the correlation of one
measurement with the target may not represent its contribution after the
other measurements are considered. The proposition therefore explains the
motivation for correlation weighting but does not claim that it always
produces the optimal graph or improves prediction accuracy in finite samples.
The proof is given in
Appendix~\ref{app:target_aware_linear_guarantee}.

\subsection{State-Adaptive \ensuremath{p}-Dirichlet Energy-Flow Graph Neural Regression(\ensuremath{p}SADE-GNR)}
\label{subsec:pgnn_prediction_model}

Let $G=(V,E,W)=(w_{ij})_{\{i, j\}\in E})$ be a finite undirected graph with fixed symmetric weights $w_{ij}>0$ on $E$. In practice, $G$ is constructed out of given data following the method described above \ref{sec:pgnn}.
The representation $r_i$ of each raw data $x_i$ attached at node $i$ was mapped to an initial hidden state by a multi-layer perceptron
\[
h_i^{(0)}=\text{MLP}_\theta(r_i),
\]
where $\text{MLP}_\theta$ contains two fully connected layers with 128 units per layer,
batch normalization, ReLU activation, and dropout $0.20$.

In general, a hidden-state matrix is $H=[h_i]_{i\in V}\in\mathbb{R}^{|V| \times r}$ such that each $h_i\in\mathbb{R}^r$ for $i\in V$ be the hidden state at vertex $i$ and define
\[
\delta = \max_{(i, j)\in E}\left\|h_i-h_j\right\|_2
\]
In our scenario, $H^t=[h_i^t]_{i\in V}\in\mathbb{R}^{|V| \times r}$ is the hidden-state matrix after iteration $t$ and $\delta_t$ is defined as above for this $H^t$.

When $\delta_t>0$, we use the adaptive effective step size
\[
\tau_t^{(p)}
=
\mu\left(\delta_t\right)^{2-p},
\]
and then update
\begin{align}
h_i^{(t+1)}
&=
h_i^{(t)}
-
\tau_t^{(p)}
\sum_{j\in\mathcal N(i)}
w_{ij}
\left\|h_i^{(t)}-h_j^{(t)}\right\|_2^{p-2}
\left(h_i^{(t)}-h_j^{(t)}\right) \notag\\
&=
h_i^{(t)}
-
\mu
\sum_{j\in\mathcal N(i)}
w_{ij}
\left(
\frac{\left\|h_i^{(t)}-h_j^{(t)}\right\|_2}
     {\delta_t}
\right)^{p-2}
\left(h_i^{(t)}-h_j^{(t)}\right),
\label{eq:adaptive_pdirichlet_update}
\end{align}
for $t=0,\ldots,T-1$. 

If $\delta_t=0$, the hidden state is already
constant along every graph edge and no further propagation is required.

Equation~\eqref{eq:adaptive_pdirichlet_update} is an adaptive forward-Euler
discretization of the graph $p$-Dirichlet energy flow. The normalization keeps
the $p$-dependent edge factors in $[0,1]$ and is central to the stability of
the propagation for large $p$, as established below.

Our predictive architecture
additionally introduces the residual average
\[
\widetilde h_i
=
\frac12 h_i^{(T)}
+
\frac12 h_i^{(0)}.
\]
This averaging is not part of the underlying $p$-Dirichlet energy flow; it is a
model-specific residual connection that retains information from the initial
encoded representation. A linear output layer then gives
\begin{equation}
\widehat y_i
=
W_{\mathrm{out}}\widetilde h_i+b_{\mathrm{out}}.
\label{pred_head}
\end{equation}

Only labels in the current training partition contribute to the mean-square error:
\[
\mathcal L_{mse}(\Theta)
=
\frac{1}{|\mathcal T_f|}
\sqrt{\sum_{i\in\mathcal T_f}
\left(y_i^{(q)}-\widehat y_i^{(q)}\right)^2}.
\]
For $p=2$, $\tau_t^{(2)}=\mu$ and the propagation reduces to the standard
weighted graph heat-flow update. For $p>2$, the adaptive normalization controls
the nonlinear edge contributions and permits stable propagation at much larger
values of $p$.

To establish that the proposed state-adaptive update retains the energy-dissipating behavior of the \(p\)-Dirichlet flow, the following theorem provides conditions under which the weighted \(p\)-Dirichlet energy,

\begin{equation}
\boxed{
\E_p(H)
=
\frac{1}{p}
\sum_{\{i,j\}\in E}
w_{ij}\lVert h_i-h_j\rVert_2^p
}
\label{eq:objective}
\end{equation}
is nonincreasing at every propagation step and strictly decreases away from equilibrium. The proof can be found in \ref{proof:shrinking}

\begin{theorem}[Shrinking property of state-adaptive $p$-Dirichlet flow]
\label{thm:shrinking}
Assume $p\ge 2$ and Let $G=(V, E, W)$ be a weighted graph as above. By convention, the weighted graph Laplacian $L_w = D_w - W$ where $D_w=\text{diag}(d_1, d_2, \dotso, d_n)$ where each $d_i=\sum_{j=1}^nw_{ij}$. Let $d_w=\max_i\{d_i\}$ and $\Lambda_w=\lambda_{\max}(L_w)$ which is the largest eigenvalue of the weight graph Laplacian.
Define
\begin{equation}
R_\mu=1+2\mu d_w,
\qquad
\chi_{p,\mu}
=
\mu(p-1)\Lambda_w R_\mu^{p-2}.
\label{eq:chi}
\end{equation}
If $\chi_{p,\mu}\le 2$, then for every propagation step $t \geq 0$, the update \eqref{eq:adaptive_pdirichlet_update} satisfies
\begin{align}
\E_p(H^{(t+1)})
&\le
\E_p(H^{(t)}) \notag\\
&\quad-
\mu\left(\delta_t\right)^{2-p}
\left(1-\frac{\chi_{p,\mu}}{2}\right)
\lVert\nabla\E_p(H^{(t)})\rVert_F^2.
\label{eq:quantitative-descent}
\end{align}
If $\chi_{p,\mu}<2$, the decrease is strict whenever $H^{(t)}$ is not
constant on every positive-weight connected component.  If $\delta_t=0$,
then $\E_p(H^{(t)})=0$ and the prescribed stopping rule gives equality.

Therefore, if $\chi_{p,\mu}<2$, every propagation step shrinks the weigheted p-Dirichlet energy 
$$\E_p(H^{(t+1)})\le\E_p(H^{(t)}).$$
\end{theorem}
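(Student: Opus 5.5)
The plan is to read the update \eqref{eq:adaptive_pdirichlet_update} as one explicit gradient step on $\E_p$ and apply a descent lemma. The curvature constant in that lemma is controlled only on the segment the step actually traverses. Write $G_t=\nabla\E_p(H^{(t)})$, whose $i$-th row is $\sum_{j}w_{ij}\|h_i^{(t)}-h_j^{(t)}\|_2^{p-2}(h_i^{(t)}-h_j^{(t)})$, and set $\tau=\tau_t^{(p)}=\mu\delta_t^{2-p}$. Then $H^{(t+1)}=H^{(t)}-\tau G_t$.

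The trivial case comes first. If $\delta_t=0$, every edge difference vanishes, so $\E_p(H^{(t)})=0$, and the stopping rule gives equality.

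For $\delta_t>0$, parametrize the segment $H(s)=H^{(t)}-s\tau G_t$ for $s\in[0,1]$ and use the integral form
\[
\E_p(H^{(t+1)})=\E_p(H^{(t)})-\tau\|G_t\|_F^2+\tau^2\int_0^1(1-s)\,\langle G_t,\nabla^2\E_p(H(s))G_t\rangle_F\,ds .
\]
The proof then rests on three estimates.

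\textbf{Step 1 (displacement bound).} By \eqref{eq:adaptive_pdirichlet_update}, the increment of node $i$ is $\mu\sum_j w_{ij}(\|h_i-h_j\|/\delta_t)^{p-2}(h_i-h_j)$. Each normalized factor lies in $[0,1]$ and each $\|h_i-h_j\|\le\delta_t$, so the increment has norm at most $\mu d_i\delta_t\le\mu d_w\delta_t$. Consequently, for every $s\in[0,1]$ and every edge, $\|h_i(s)-h_j(s)\|\le\delta_t+2\mu d_w\delta_t=R_\mu\delta_t$.

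\textbf{Step 2 (Hessian bound).} For one edge, the Hessian of $x\mapsto\frac1p\|x\|^p$ is $\|x\|^{p-2}\bigl(I+(p-2)\frac{xx^\top}{\|x\|^2}\bigr)$. Its eigenvalues lie in $[0,(p-1)\|x\|^{p-2}]$ because $p\ge2$. Summing over edges gives
\[
\langle V,\nabla^2\E_p(H(s))V\rangle_F\le(p-1)(R_\mu\delta_t)^{p-2}\sum_{\{i,j\}\in E}w_{ij}\|v_i-v_j\|^2 .
\]
The edge sum equals $\operatorname{tr}(V^\top L_wV)\le\Lambda_w\|V\|_F^2$.

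\textbf{Step 3 (assemble).} Plugging Steps 1 and 2 into the integral form gives
\[
\E_p(H^{(t+1)})\le\E_p(H^{(t)})-\tau\|G_t\|_F^2\Bigl(1-\tfrac{\tau}{2}(p-1)\Lambda_wR_\mu^{p-2}\delta_t^{p-2}\Bigr).
\]
Substituting $\tau=\mu\delta_t^{2-p}$ cancels the powers of $\delta_t$, leaving exactly $1-\chi_{p,\mu}/2$. This is \eqref{eq:quantitative-descent}.

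\textbf{Strictness.} Suppose $\chi_{p,\mu}<2$. It remains to show $G_t\neq0$ whenever $H^{(t)}$ is not constant on some positive-weight component. I would use the $p$-homogeneity of $\E_p$ together with Euler's identity, $\langle\nabla\E_p(H),H\rangle_F=p\,\E_p(H)$. If $G_t=0$, this forces $\E_p(H^{(t)})=0$. Since every $w_{ij}>0$ on $E$, each edge difference must then vanish, so $H^{(t)}$ is constant on every component.

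\textbf{Main obstacle.} The delicate step is Step 2 when $2\le p<3$. There $\E_p$ is only $C^1$, not $C^2$, at coincident endpoints $h_i=h_j$, so the Hessian form of Taylor's theorem is not literally available along the segment. I would handle this by replacing the second-order formula with the first-order one, $\E_p(H^{(t+1)})-\E_p(H^{(t)})=-\tau\int_0^1\langle\nabla\E_p(H(s)),G_t\rangle_F\,ds$. On each edge I would then bound $\langle\nabla\phi(x)-\nabla\phi(y),x-y\rangle$, where $\phi(x)=\frac1p\|x\|^p$, by $(p-1)\max(\|x\|,\|y\|)^{p-2}\|x-y\|^2$. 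This bound follows from the Hessian estimate on segments that avoid the origin, and it extends by continuity, because the set of $s$ at which an edge difference vanishes is either all of $[0,1]$ or finite. The rest of the argument goes through unchanged.
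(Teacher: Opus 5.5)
Your proposal is correct and follows the paper's proof essentially step for step. It uses the same displacement bound giving edge differences at most $R_\mu\delta_t$ along the Euler segment, the same edgewise Hessian bound $(p-1)\|z\|_2^{p-2}$ combined with $\operatorname{tr}(V^\top L_w V)\le\Lambda_w\|V\|_F^2$, second-order Taylor along the segment, and Euler's identity $\langle\nabla\mathcal{E}_p(H),H\rangle_F=p\,\mathcal{E}_p(H)$ for strictness.

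One correction: your ``main obstacle'' is not real. For $p>2$ the Hessian $\|x\|_2^{p-2}\bigl(I+(p-2)xx^\top/\|x\|_2^2\bigr)$ tends to $0$ as $x\to0$, so $\mathcal{E}_p$ is $C^2$ for every $p\ge2$ (it only fails to be $C^3$ when $2<p<3$). The paper's direct use of Taylor's theorem is therefore legitimate, and your first-order workaround is valid but unnecessary.
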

\begin{remark}
    In our experiments, the step size $\mu$ is sufficiently small such that $p\mu\ll 1$, which means that $\mu$ scales approximately as $1/p$ or even smaller ($\mu\leq 10^{-4}$ is always the case when $p$ is small). Therefore, most of time, $\chi_{p, \mu}\le 2$, which, by Theorem \ref{thm:shrinking}, indicates that the $p$-Dirichlet energy will decrease during the iterative updating process. 
\end{remark}

\begin{algorithm}[h]
\caption{Training the \(p\)SADE-GNR model}
\label{alg:psade_gnr}
\begin{algorithmic}

\REQUIRE Fold-specific graph \(G=(V,E,W)\), node representations
\(\{r_i\}_{i\in V}\), training set \(\mathcal T_f\), targets
\(\{y_i^{(q)}\}_{i\in\mathcal T_f}\), \(p\geq2\), \(\mu>0\),
propagation depth \(T\), and training epochs \(N_{\mathrm{ep}}\)

\ENSURE Trained parameters
\(\Theta=\{\theta,W_{\mathrm{out}},b_{\mathrm{out}}\}\)

\STATE Initialize \(\Theta\)

\FOR{\(e=1,\ldots,N_{\mathrm{ep}}\)}

    \STATE Encode
    \(h_i^{(0)}\leftarrow\mathrm{MLP}_{\theta}(r_i)\)
    for all \(i\in V\)

    \FOR{\(t=0,\ldots,T-1\)}

        \STATE Compute
        \(\displaystyle
        \delta_t\leftarrow
        \max_{\{i,j\}\in E}
        \|h_i^{(t)}-h_j^{(t)}\|_2
        \)

        \IF{\(\delta_t=0\)}
            \STATE Set \(h_i^{(T)}\leftarrow h_i^{(t)}\)
            for all \(i\in V\), and terminate propagation
        \ELSE
            \STATE Simultaneously update all hidden states using
            \eqref{eq:adaptive_pdirichlet_update}
        \ENDIF

    \ENDFOR

    \STATE Form the residual representation
    \(\displaystyle
    \widetilde h_i
    \leftarrow
    \tfrac12 h_i^{(T)}+\tfrac12h_i^{(0)}
    \)
    for all \(i\in V\)

    \STATE Predict
    \(\displaystyle
    \widehat y_i^{(q)}
    \leftarrow
    W_{\mathrm{out}}\widetilde h_i+b_{\mathrm{out}}
    \)
    for all \(i\in V\)

    \STATE Compute
    \(\displaystyle
    \mathcal L_{\mathrm{mse}}
    \leftarrow
    \frac{1}{|\mathcal T_f|}
    \sum_{i\in\mathcal T_f}
    \bigl(y_i^{(q)}-\widehat y_i^{(q)}\bigr)^2
    \)

    \STATE Update \(\Theta\) by backpropagation

\ENDFOR

\RETURN \(\Theta\)

\end{algorithmic}
\end{algorithm}

\subsection{Hyperparameter Selection and Implementation}\label{subsec:hyperparameters}

The neighborhood size was selected from
\[
 k_{\mathrm{NN}}
 \in
 \{2,5,10,15,20,25,30,35,40\}.
\]
The graph $p$-Laplacian exponent was selected from
\[
 p\in\{2,3,5,10,10^{2},10^{4},10^{6}\}.
\]
For each $p$, the candidate numbers of diffusion iterations were \label{p_range}
\[
\begin{aligned}
 p=2      &: \quad T\in\{5,10\},\\
 p=3      &: \quad T\in\{10,15\},\\
 p=5      &: \quad T\in\{15,30\},\\
 p=10     &: \quad T\in\{40,80\},\\
 p=10^{2} &: \quad T\in\{100,200\},\\
 p=10^{4} &: \quad T\in\{400,800\},\\
 p=10^{6} &: \quad T\in\{1600,3200\}.
\end{aligned}
\]
All candidate configurations were compared only within the inner validation
procedure. The diffusion step size $\mu$ was computed as a deterministic
function of $p$ and $T$ and was reduced by a fixed multiplicative back-off
factor if the diffusion produced non-finite activations or loss values. The
selected hyperparameters were then refitted using the complete outer training
set before evaluation on the held-out fold.


The $p$SADE-GNR was optimized using AdamW and a one-cycle learning-rate schedule over
$[10^{-4},10^{-2}]$. Training was limited to 600 epochs, with early stopping
after 30 epochs without improvement in the inner-validation loss. Experiments
used random seed 42 and the same CUDA-enabled computing environment across
model configurations. 

The source code is publicly available on GitHub at https://github.com/GowriPriyaSunkara/pLaplacian-Graph-NN and https://github.com/jonathanwang0514/pLaplacian-Graph-NN.

\subsection{Model Configurations}\label{para:summary}

Seven representation--graph configurations were evaluated:
\begin{itemize}
    \item \textbf{$L^2$--$p$SADE-GNR}: standardized anthropometric representation with
    Euclidean graph distance;

    \item \textbf{cc-$L^2$--$p$SADE-GNR}: standardized anthropometric representation with target-aware correlation -weighted distance;

    \item \textbf{VAE-$L^2$--$p$SADE-GNR}: VAE representation with Euclidean distance;

    \item \textbf{VAE-cc-$L^2$--$p$SADE-GNR}: VAE representation with latent-coordinate
    correlation weighting;

    \item \textbf{GMVAE-$L^2$--$p$SADE-GNR}: GMVAE expected representation with
    Euclidean distance;

    \item \textbf{GMVAE-cc-$L^2$--$p$SADE-GNR}: GMVAE expected representation with
    latent-coordinate correlation weighting; and

    \item \textbf{AE-$L^2$--$p$SADE-GNR}: deterministic autoencoder representation with
    Euclidean distance.
\end{itemize}
This comparison separates the effects of representation learning and
outcome-specific graph construction. In particular, comparison of $L^2$--$p$SADE-GNR
with cc-$L^2$--$p$SADE-GNR isolates the contribution of target-aware graph geometry,
whereas comparison of $p=2$ with tuned $p$ values assesses whether nonlinear
$p$-Laplacian diffusion contributes beyond a standard graph Laplacian. The flow charts of some of these configurations are depicted in figures \ref{fig:group1} and \ref{fig:vae}.

\begin{figure}[!t]
    \centering
    \includegraphics[width=0.85\textwidth]{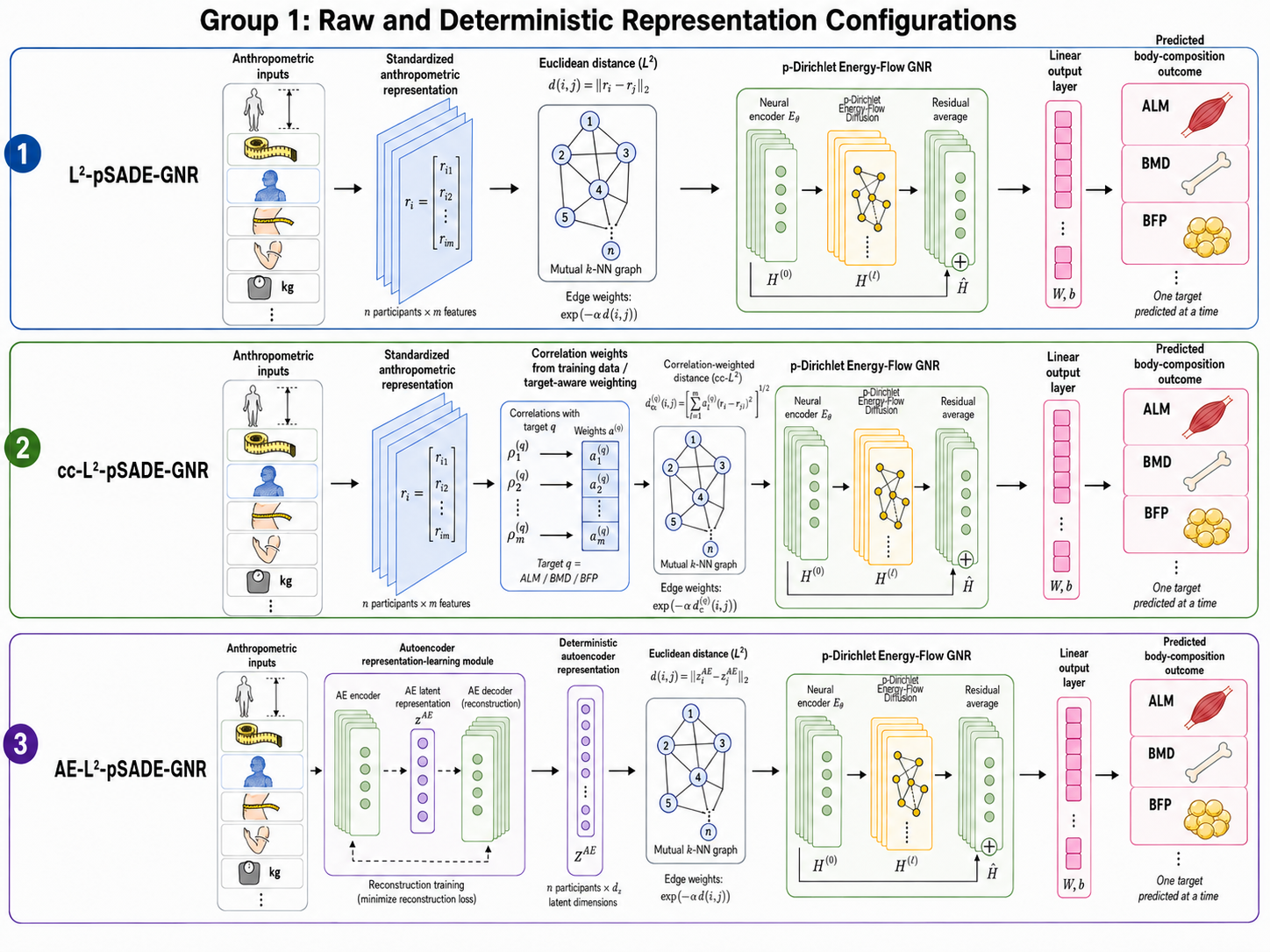}
    \caption{Schematic comparison of three $p$-Dirichlet Energy-Flow Graph Neural Regression ($p$SADE-GNR) pipelines. (1) $L^2$-$p$SADE-GNR: standardized anthropometric measurements are used directly to construct a mutual $k$-nearest-neighbor graph using Euclidean distance. (2) cc-$L^2$–$p$SADE-GNR: each standardized measurement is weighted according to its training-set correlation with the prediction target, producing a target-specific correlation-weighted distance before graph construction. (3) AE-$L^2$-$p$SADE-GNR: a deterministic autoencoder first maps the anthropometric measurements to a lower-dimensional latent representation, from which the graph is constructed using Euclidean distance. In all three configurations, the resulting participant-similarity graph is processed by the same $p$-SADE-GNR architecture. Appendicular lean mass (ALM), bone mineral density (BMD), and body fat percentage (BFP) are predicted as separate single-target regression tasks.}
    \label{fig:group1}
\end{figure}

\begin{figure}[!t]
    \centering
    \includegraphics[width=0.85\textwidth]{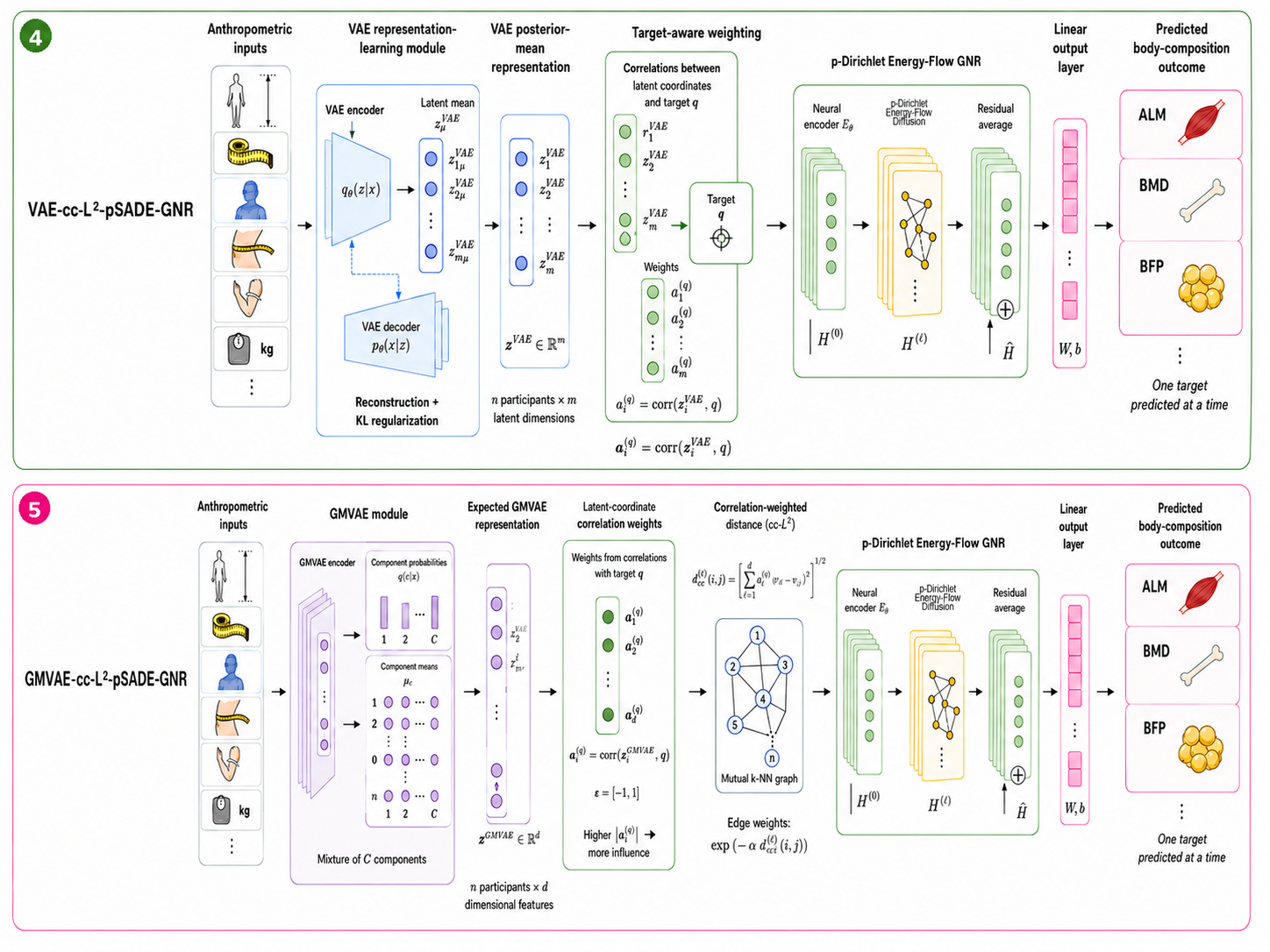}
    \caption{Schematic comparison of the two correlation-weighted variational pipelines. (4) VAE-cc-$L^2$– $p$SADE-GNR: a variational autoencoder maps the standardized anthropometric measurements to a continuous latent space, and the posterior-mean representation is used for graph construction. Latent coordinates are weighted according to their training-set correlations with the current target, yielding a target-aware correlation-weighted distance. (5) GMVAE-cc-$L^2$–$p$SADE-GNR: a Gaussian-mixture variational autoencoder represents each participant by the posterior expectation over mixture-specific latent means, after which latent-coordinate correlations are used to define the target-aware distance. In both configurations, the weighted distance determines a mutual $k$-nearest-neighbor participant graph and its distance-based edge weights. The graph is then processed by the $p$SADE-GNR to predict ALM, BMD, or BFP, with one outcome modeled at a time.}
    \label{fig:vae}
\end{figure}

\section{Experimental Results}\label{sec:results}

The predicted anthropometric measurements are Appendicular Lean Mass (ALM), Bone Mineral Density (BMD), and Body Fat Percentage (BFP). There were seven models used to predict each of these measurements. For each of these measurements, the data was partitioned by sex into male, female, and combined. Outlined in the table below (Table \ref{tab:avgkfold_results_relative_RMSE}) the lowest relative root mean square error percentage ($\mathrm{RMSE}_{\mathrm{relativepct}}$) of each of the models run on each of the anthropometric measurements. Recall from Section \ref{para:summary} that $L^2$ means that the graph constructed for the training of the $p$SADE-GNR uses the standard Euclidean distance, $cc$ means that correlation coefficients were used as weights on the distance function, VAE means that the variational autoencoder was used before the graph was constructed, and GMVAE means that the Gaussian mixture variational autoencoder was used before the construction of the graph. In each of the tables below, the best $\mathrm{RMSE}_{\mathrm{relativepct}}$ obtained for each sex is double-underlined. In Table  \ref{tab:avgkfold_results_relative_RMSE}, the results are compared with the results from \cite{gyaneshwar_et_al}. The models that performed best in their paper were either Support Vector Regression (SVR) or Least Squares Support Vector Regression (LSSVR). In Table \ref{tab:avgkfold_results_relative_RMSE} the models' $\mathrm{RMSE}_{\mathrm{relativepct}}$ are compared with each other to find the best $\mathrm{RMSE}_{\mathrm{relativepct}}$ values obtained for each sex.

\paragraph{Dataset.}The experiments were conducted using a real, non-public participant-level dataset collected at the Pennington Biomedical Research Center (PBRC). The analytic cohort comprised (515) unique participants, including (270) female and (245) male participants, with ages ranging from (5) to (77) years (mean $(27.6\pm19.9) years)$. For each participant, the dataset contains (43) non-invasive anthropometric measurements, including height and weight, body circumferences, limb lengths, regional volumes, and body-surface-area measurements. The prediction targets were three body-composition outcomes obtained from dual-energy X-ray absorptiometry (DXA): appendicular lean mass (ALM, kg), total bone mineral density (BMD, ($\mathrm{gm/cm^2})$), and total body fat percentage (BFP, \%). Separate experiments were conducted for the male, female, and combined-sex cohorts. Qualified researchers may direct reasonable access requests to the final two authors; any data release will be subject to PBRC approval, applicable ethical and privacy requirements, and execution of an appropriate data-use agreement.

\subsection{Exploratory UMAP Visualization}
\label{subsec:umap_results}

Figures~\ref{fig:umap_alm} and~\ref{fig:umap_age} provide exploratory
two-dimensional visualizations of the standardized raw anthropometric
features, the VAE representation, and the GMVAE expected representation.
UMAP was used only for visualization and was not used to construct the
prediction graph or produce the reported predictions.

Figure~\ref{fig:umap_alm} shows the three representations colored by
standardized ALM. A broad ALM-associated gradient is visible in each
embedding. In the raw-feature representation, lower ALM values occur more
frequently toward one end of the embedding, whereas higher values are more
common toward the opposite end. The VAE and GMVAE embeddings retain similar
broad target-associated organization, although substantial local overlap
between ALM values remains in all three representations.

The GMVAE embedding exhibits a more branched geometry than the raw-feature
and VAE embeddings. Here, branched geometry refers to the appearance of multiple elongated regions or arms extending from narrower connecting regions. In particular, an apparent branching region is visible near the central-left portion of the GMVAE embedding, where the point cloud separates into an upper and lower/central branches. Additional curved extensions are visible toward the upper-right and lower-right portions of the embedding. By comparison, the raw-feature UMAP forms a broad, irregular cloud, whereas the VAE embedding displays a pronounced horseshoe-shaped outer band with several smaller groups in its interior. This geometric separation should not, by itself, be
interpreted as evidence of distinct biological subgroups. In addition,
because the GMVAE representation includes a supervised regression component,
the ALM-associated organization in this embedding is not an independent
unsupervised validation of subgroup structure.

Figure~\ref{fig:umap_age} shows the same representations colored by
standardized Age. Compared with ALM, Age exhibits a weaker and more
fragmented regional pattern. Individuals with different ages overlap
substantially throughout the raw-feature, VAE, and GMVAE embeddings, although
some localized age-associated variation is visible. These visualizations
therefore suggest that ALM is more consistently aligned with the local
two-dimensional organization displayed by UMAP, whereas Age remains more
heterogeneously distributed.

These observations are descriptive and should not be interpreted as a
quantitative assessment of representation quality. In particular, the shape,
orientation, and separation of a two-dimensional UMAP embedding do not by
themselves establish predictive superiority, biological clustering, or
preservation of global distances.
\begin{figure}[H]
    \centering
    \includegraphics[width=0.32\textwidth]{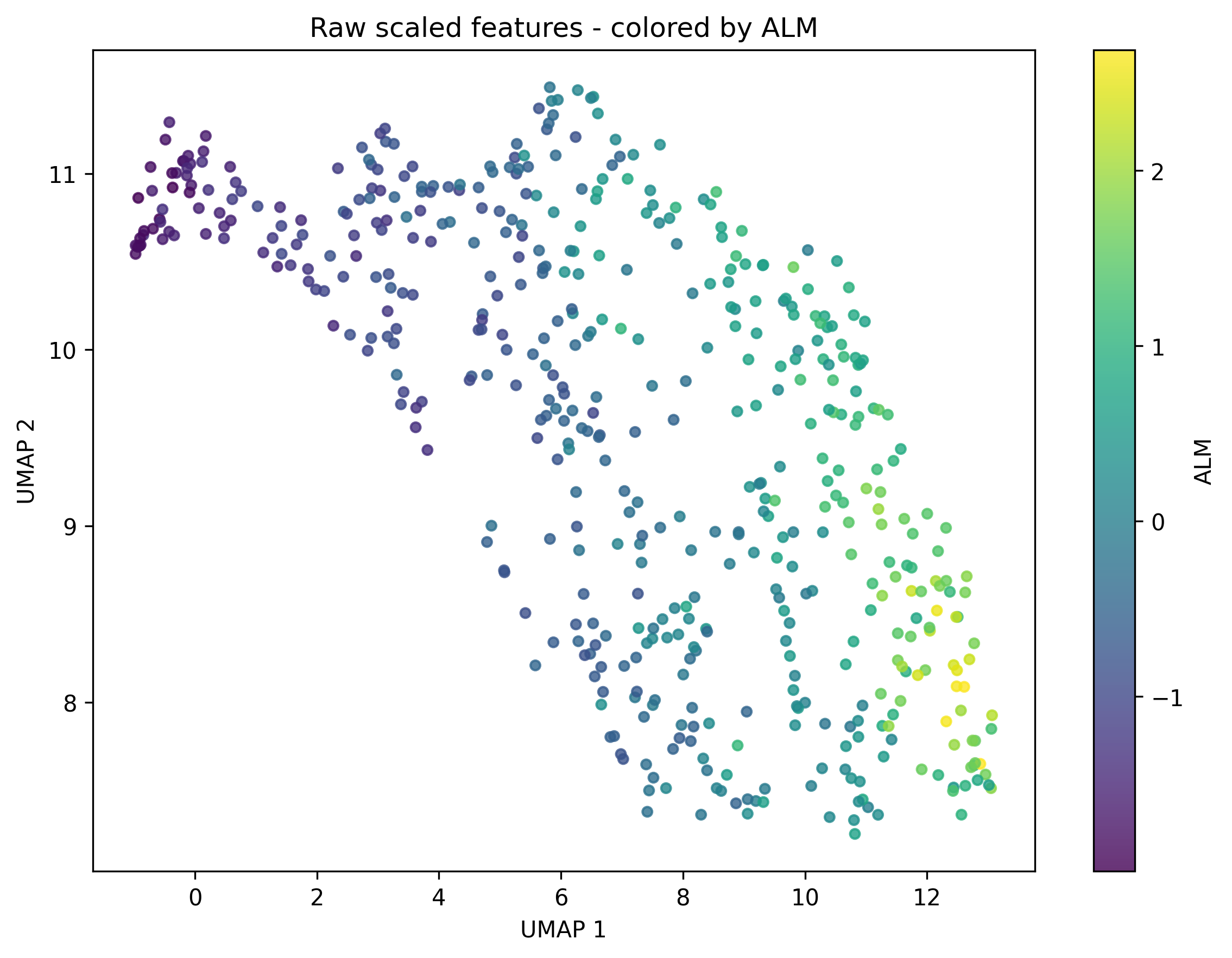}
    \includegraphics[width=0.32\textwidth]{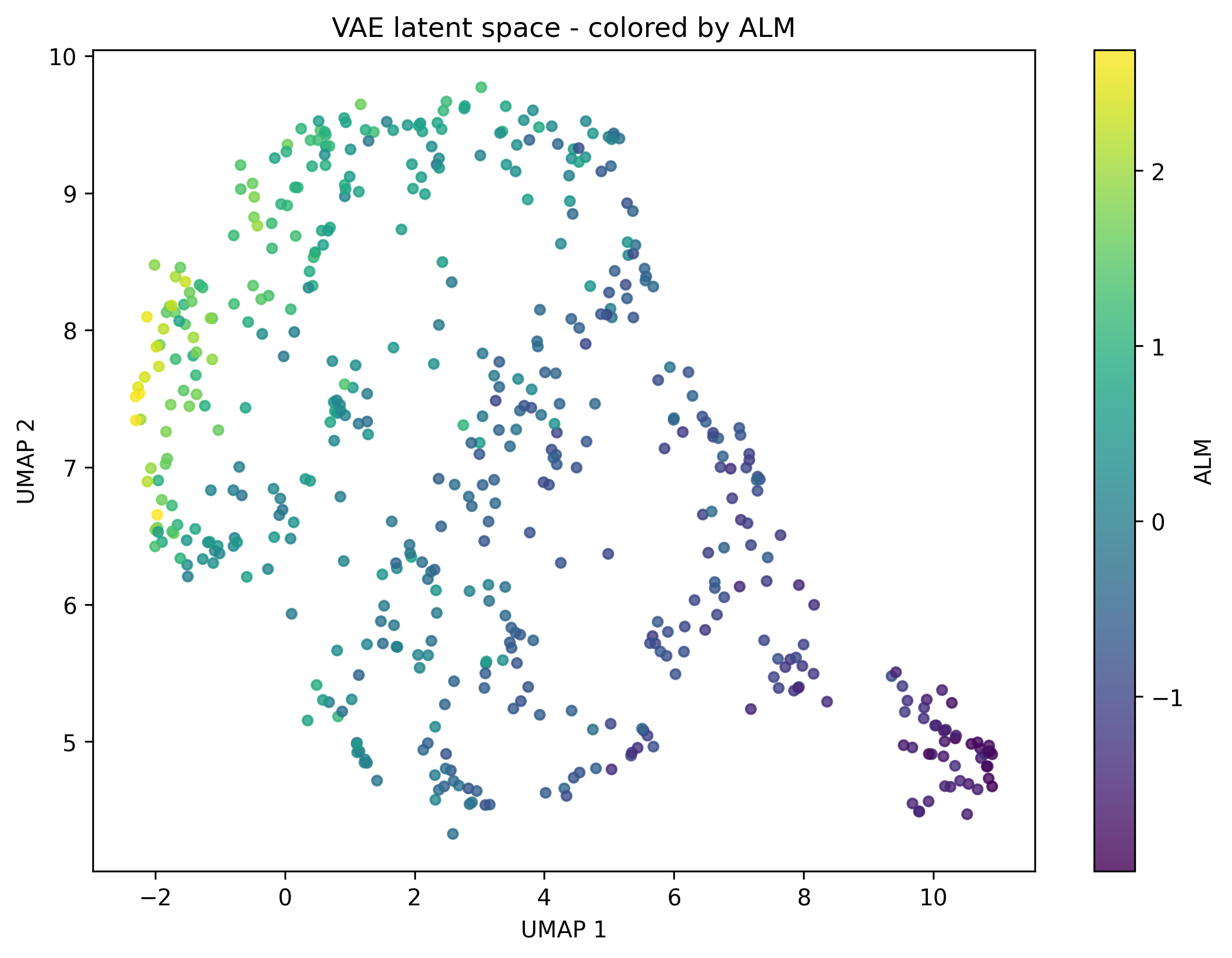}
    \includegraphics[width=0.32\textwidth]{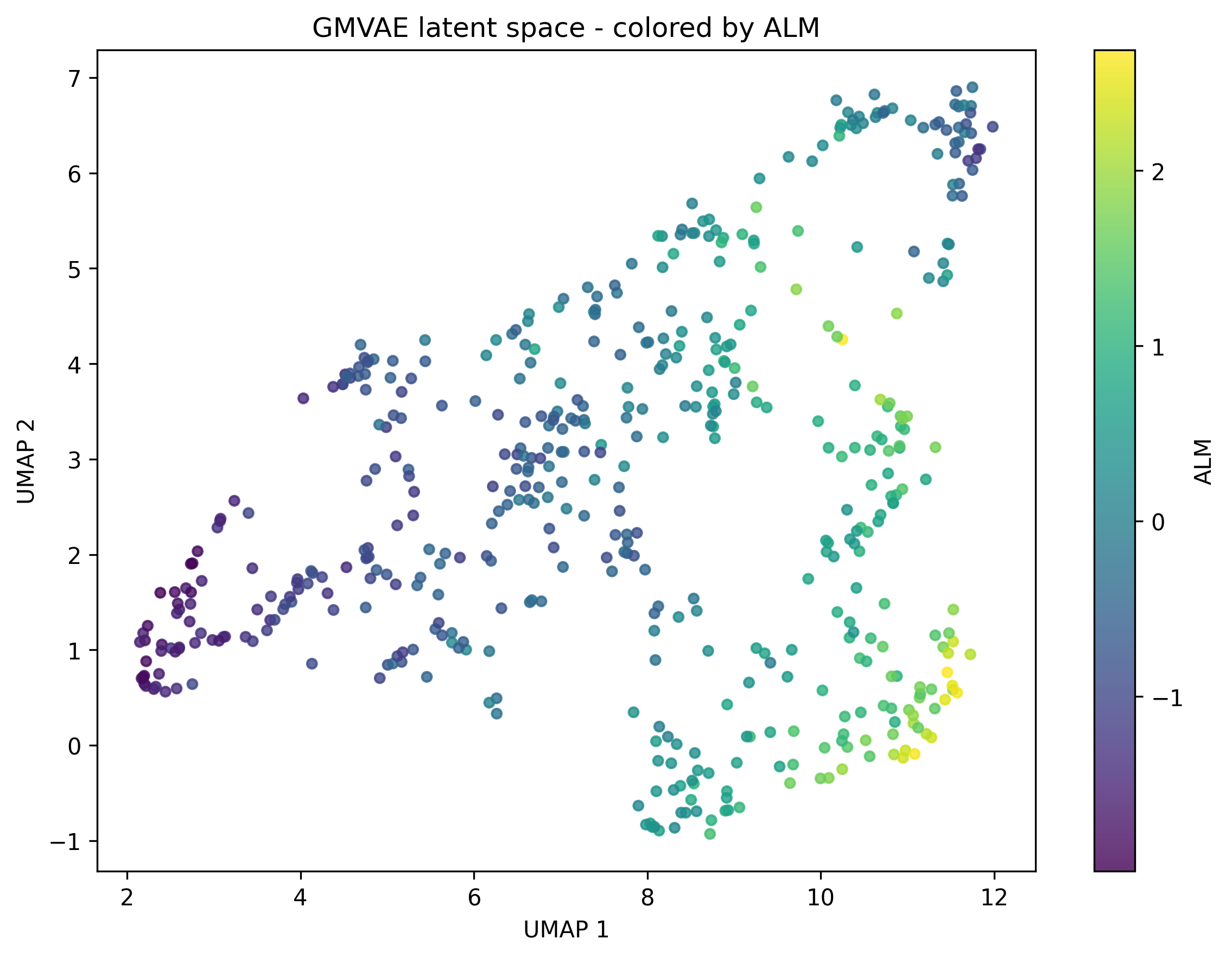}
    \caption{Two-dimensional UMAP embeddings of the Pennington data colored by
standardized ALM. Left: standardized raw anthropometric features.
Middle: VAE latent representation. Right: GMVAE expected latent
representation. The same ALM color scale is used across all three panels.
UMAP was used for exploratory visualization only.}
    \label{fig:umap_alm}
\end{figure}
\begin{figure}[H]
    \centering
    \includegraphics[width=0.32\textwidth]{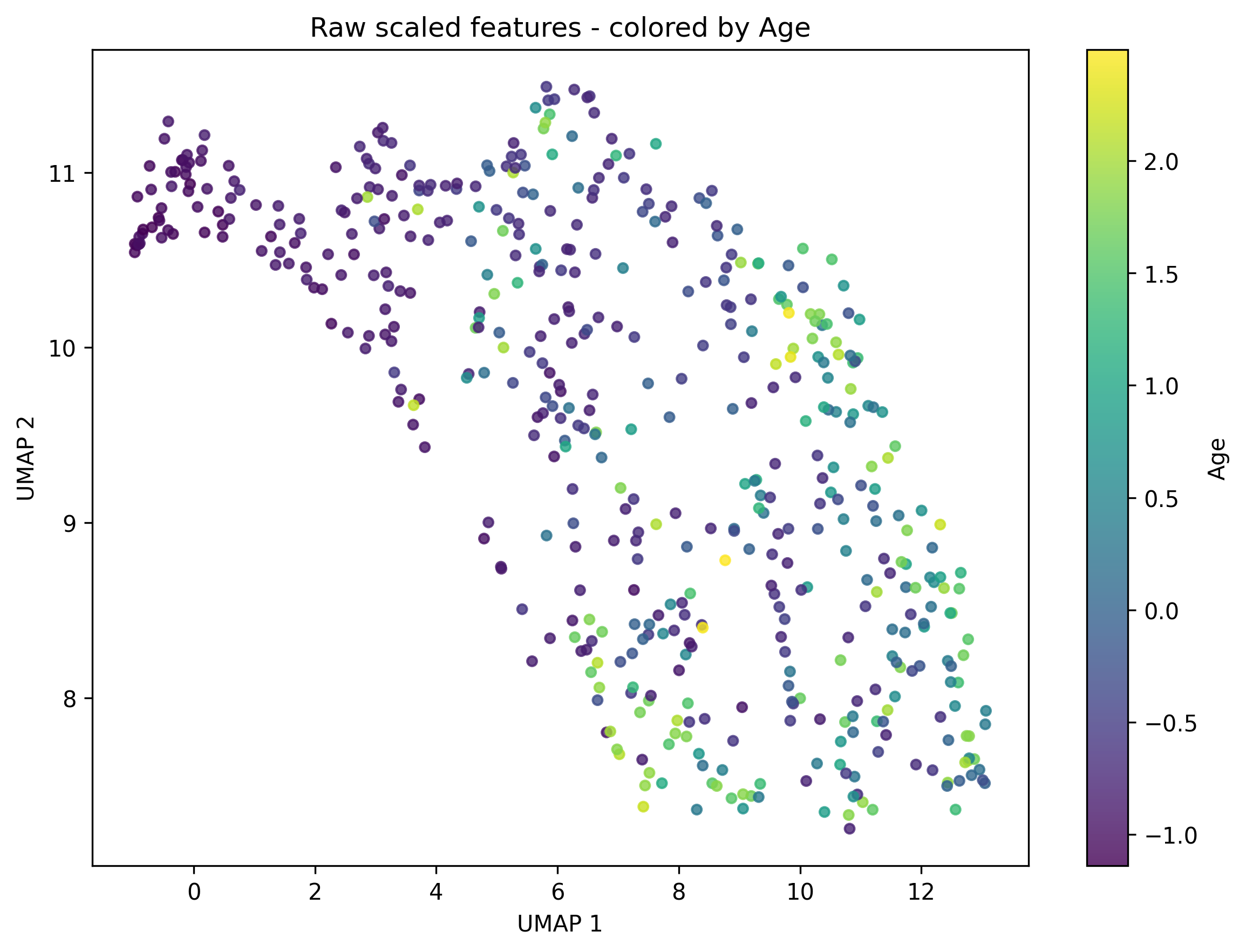}
    \includegraphics[width=0.32\textwidth]{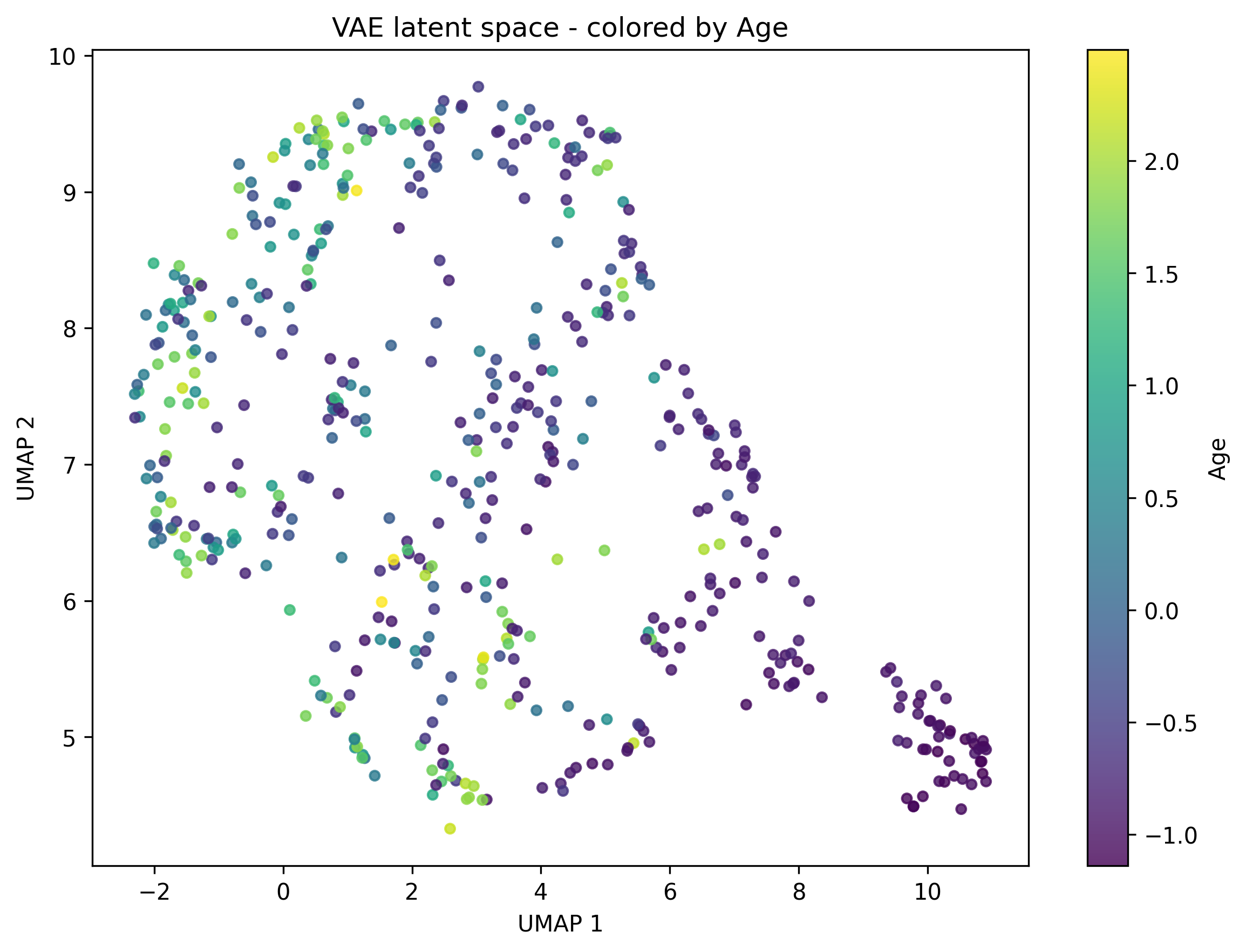}
    \includegraphics[width=0.32\textwidth]{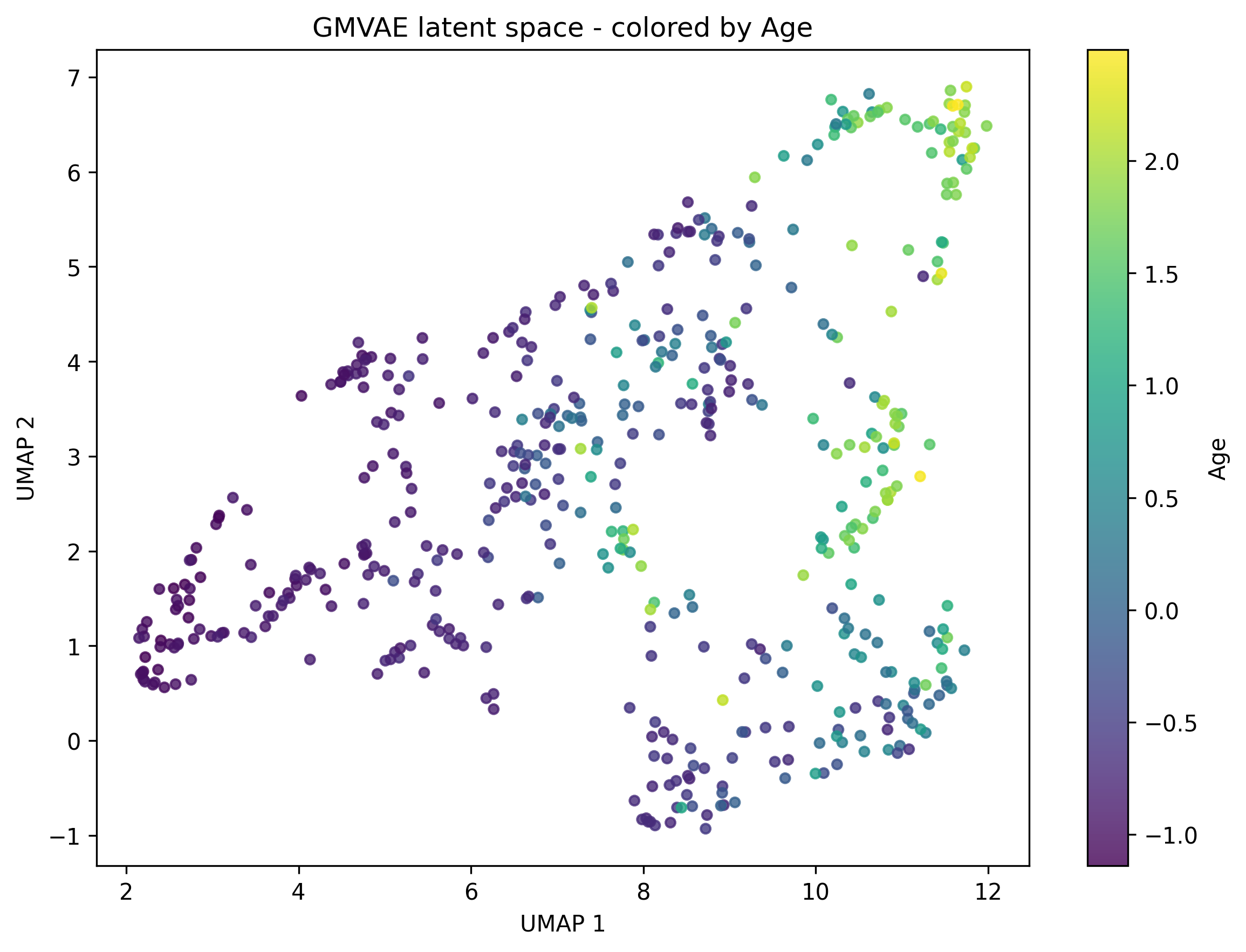}
    \caption{Two-dimensional UMAP embeddings of the Pennington data colored by
standardized Age. Left: standardized raw anthropometric features.
Middle: VAE latent representation. Right: GMVAE expected latent
representation. The same Age color scale is used across all three panels.
UMAP was used for exploratory visualization only. }
    \label{fig:umap_age}
\end{figure}

\subsection{MAPPER Visualization}

Figure \ref{fig:Penn_data_mapper} illustrates the clusters in the dataset as vertices and if there are data points that are shared between clusters, there is a single edge between the vertices. This visualization can provide some insight into some topological features that may be present in the dataset during different embedding methods. Here the Variational Autoencoder (VAE) and the Gaussian Mixture Variational Autoencoder (GMVAE) were used to embed the data in a lower dimension. After each of these methods, the MAPPER algorithm was used to visualize any potential topological features in the dataset. Before passing to the dimension reduction algorithms the data was sorted by Appendicular Lean Mass (ALM), so that any clustering based on ALM would be evident by the gradient of colors. As seen in the previous subsection (\ref{subsec:umap_results}), the ALM maintained a gradient of low (purple) and high (yellow) values when projected using the UMAP visualization tool. This pattern is not seen in the MAPPER projections. In the MAPPER projections, there is a more homogeneous pattern of the coloring of the vertices. This implies that the ALM does not distinguish different groups of clusters in the different embeddings. If the ALM did distinguish different groups, there would be a clear distinction between clusters with high and low ALM. By using the MAPPER algorithm, possible clustering patterns could be taken into consideration. Because each of the subfigures in Figure \ref{fig:mapper} have little interesting clustering patterns, further cluster analysis and topological data analysis were not performed on the data. For more details on the MAPPER and how it was implemented in more detail, see the appendix \ref{app:mapper_umap}.

\begin{figure}[H]
    \centering
    {%
    \setlength{\fboxsep}{0pt}%
    \setlength{\fboxrule}{0.5pt}%
    \fbox{\includegraphics[width=0.3\textwidth, height = 0.3\textwidth]{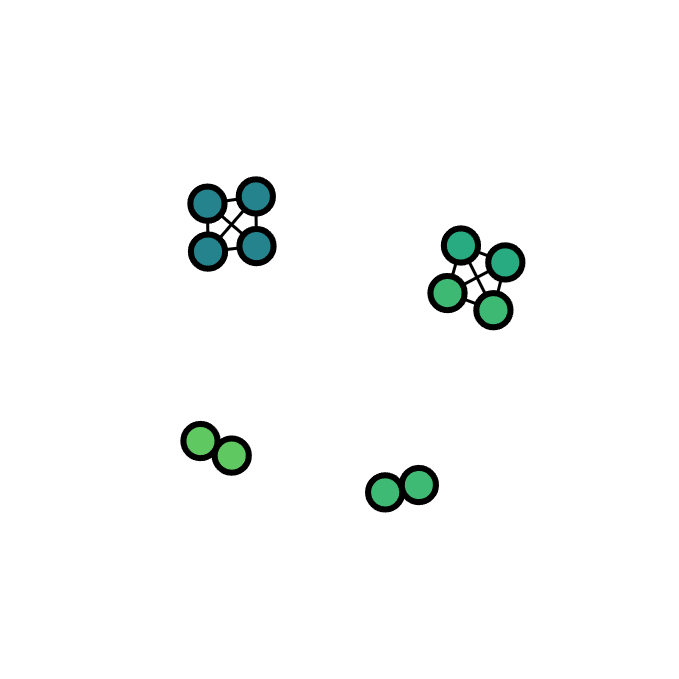}}%
    }%
    {%
    \setlength{\fboxsep}{0pt}%
    \setlength{\fboxrule}{0.5pt}%
    \fbox{\includegraphics[width=0.3\textwidth, height = 0.3\textwidth]{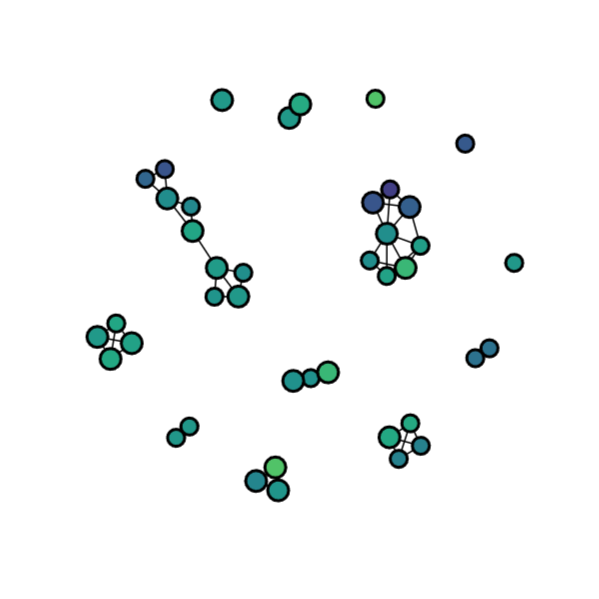}}%
    }%
    {%
    \setlength{\fboxsep}{0pt}%
    \setlength{\fboxrule}{0.5pt}%
    \fbox{\includegraphics[width=0.3\textwidth, height = 0.3\textwidth]{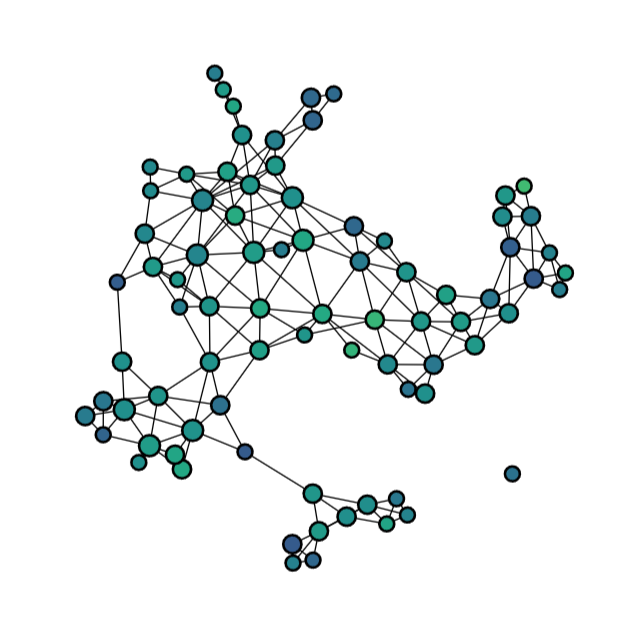}}%
    }%
    \caption{Exploratory MAPPER graphs for the standardized raw-feature
representation (left), VAE posterior-mean representation (middle), and
GMVAE expected latent representation (right). Each node represents a
cluster within a pullback-cover element, and an edge indicates that two
clusters share at least one participant. Node color denotes the mean
standardized ALM of the participants in the cluster, using a common
color scale across panels; node size denotes the relative size of the cluster. The MAPPER graphs are descriptive and are distinct from the participant-similarity graphs used by the prediction model.}
    \label{fig:Penn_data_mapper}
\end{figure}

\subsection{Feature--Target Correlation Structure}
\label{subsec:correlation_structure}

\begin{figure}[!t]
  \centering
  \includegraphics[width=0.65\linewidth]
  {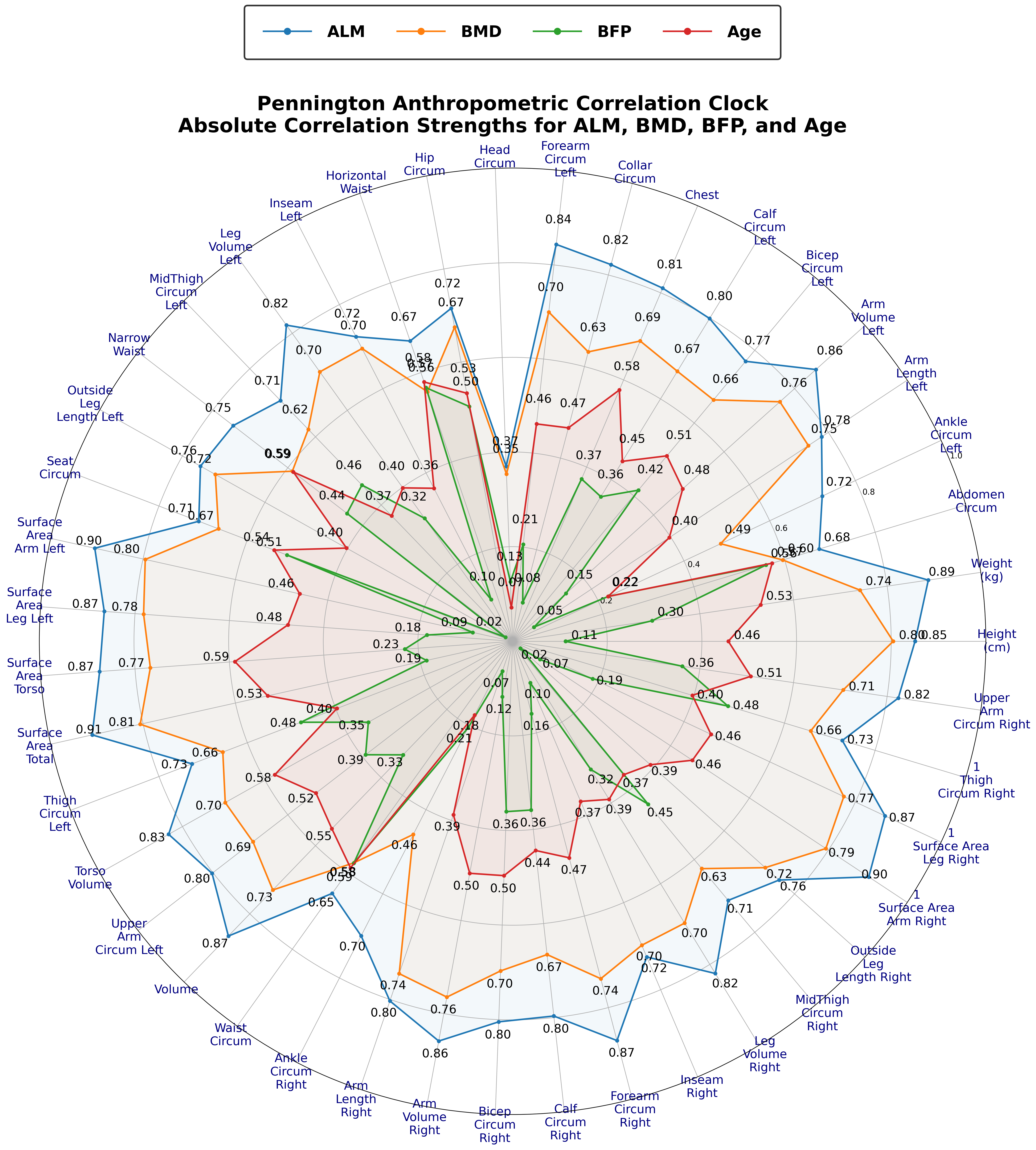}
  \caption{Absolute Pearson correlations between the non-invasive
  anthropometric predictors and ALM, BMD, BFP, and Age in the combined
  Pennington cohort. The figure is a descriptive full-cohort summary and is
  separate from the training-fold correlations used to construct the
  correlation-weighted prediction graphs.}
  \label{fig:clock}
\end{figure}

Figure~\ref{fig:clock} summarizes the absolute Pearson correlations between
the anthropometric predictors and ALM, BMD, BFP, and Age in the combined
cohort. Each spoke represents one predictor, and the radial coordinate gives
the absolute magnitude of its marginal correlation with the corresponding
target. Because absolute values are displayed, the figure represents the
strength, but not the direction, of each association.

The four outcomes exhibit distinct correlation profiles. ALM has broadly
strong associations with measures of overall and regional body size,
including weight, surface area, limb volume, and several circumference
measurements. BMD is also associated with multiple size-, length-, and
surface-area-related predictors, although its correlations are generally
weaker than those observed for ALM. In contrast, the strongest BFP
associations are concentrated among waist, abdominal, hip, seat, and thigh
circumference measurements, while many body-length variables have weaker
associations with BFP. Age shows moderate associations primarily with waist-
and torso-related measurements.

These differing profiles provide descriptive motivation for constructing
outcome-specific participant-similarity metrics: measurements that are
informative for one outcome need not be equally informative for another.
The figure does not, however, demonstrate that correlation weighting improves
prediction. That question is evaluated through the held-out comparison
between the ordinary Euclidean and correlation-weighted graph models reported
below.

The displayed correlations are marginal associations and should not be
interpreted as causal effects or as complete measures of predictive
importance. The predictors are themselves correlated and may therefore
contain overlapping information. Moreover, because Figure~\ref{fig:clock}
uses the combined cohort, its correlations may reflect both within-sex
associations and between-sex differences. The full numerical correlation
values are reported in Table~\ref{corr_all} in the Supplementary Material.

Figure~\ref{fig:clock} is used only to characterize the dataset. For every
correlation-weighted prediction model, feature weights were recomputed from
the training observations within each cross-validation split and within the
corresponding cohort, as described in Section~\ref{sec:pgnn}. No validation
or test target values were used to estimate these weights or construct the
prediction graphs.

\subsection{Results for ALM, BMD, BFP using \texorpdfstring{RMSE\textsubscript{relativepct}}{RMSE relativepct}}
\newcommand{\onestd}[1]{{\footnotesize\textsubscript{\textcolor{black!50}{$\pm$#1}}}} 



The prediction outcomes are Appendicular Lean Mass (ALM), Bone Mineral Density (BMD), and Body Fat Percentage (BFP). Seven configurations were evaluated separately in the male, female, and combined cohorts, and the tables report the lowest relative root-mean-square error, \(\mathrm{RMSE}_{\mathrm{relativepct}}\), obtained by each configuration. For \(p\)SADE-GNR, candidate \(p\)-values spanning \(2\) to \(10^{6}\) were evaluated, and the lowest-error result was retained. As defined in Section~\ref{para:summary}, \(L^2\) denotes Euclidean graph construction, \(cc\) denotes the correlation-weighted distance, and VAE and GMVAE denote latent representations used before graph construction.

Table~\ref{tab:avgkfold_results_relative_RMSE} also compares our results with the strongest benchmarks reported by \cite{gyaneshwar_et_al} and with the \(p\)-Laplacian GNN of \cite{pgnn}. The comparison with \cite{gyaneshwar_et_al} is particularly direct because both studies use the identical preprocessed dataset, the same ALM, BMD, and BFP outcomes, the same male, female, and combined cohorts, the same \(80{:}20\) training-to-test ratio, and the same \(\mathrm{RMSE}_{\mathrm{relativepct}}\) metric. Their results therefore provide closely aligned within-dataset benchmarks rather than estimates transferred from a different cohort or dataset. That study evaluated six supervised regression families—linear and polynomial regression with traditional, Lasso, ridge, and Bayesian variants; SVR; LSSVR; random forest; XGBoost; and a multilayer perceptron—together with two game-theoretic \(p\)-Laplacian regression variants. Because either SVR or LSSVR achieved the lowest error in all nine outcome--cohort combinations, we report the better of the two for each corresponding task.

The \(p\)-Laplacian GNN of \cite{pgnn} was originally formulated for classification with a softmax output; we adapted it to regression by replacing the softmax layer with a linear output. We evaluated every integer \(p\in\{1,\ldots,10\}\) and retained the lowest-error result, as values \(p>10\) were numerically unstable in our experiments.

\paragraph{Evaluation Metrics.}
\begin{definition}[Relative Root Mean Squared Error]
\begin{equation}
\mathrm{RMSE}_{\mathrm{relativepct}}
=100\times
\frac{
\sqrt{
\frac{1}{n}
\sum_{i=1}^{n}
\left( y_i - \hat{y}_i \right)^2
}}
{\sqrt{\frac{1}{n}\sum_{i=1}^{n}y_i^2}}.
\end{equation}
\end{definition}
For Table \ref{tab:avgkfold_results_relative_RMSE} $\mathrm{RMSE}_{\mathrm{relativepct}}$ is used.

\begin{definition}[Root Mean Squared Error]
The Root Mean Squared Error (RMSE) on the original scale is defined as
\begin{equation}
\mathrm{RMSE}_{\mathrm{orig}}
=
\sqrt{
\frac{1}{n}
\sum_{i=1}^{n}
\left( y_i - \hat{y}_i \right)^2
}.
\end{equation}
\end{definition}
For Tables \ref{tab:combined_results_original_RMSE} and \ref{tab:combined_results_RMSE_orig_age}, $\mathrm{RMSE}_{\mathrm{orig}}$ is used.

\paragraph{Five-Fold Cross Validation.}
For each cohort, the data were randomly partitioned into five folds. In each iteration, four folds were used for model training and the remaining fold for validation. The validation fold was rotated across the five iterations so that each participant was included in the validation set once. Model performance was then summarized by the mean and standard deviation of the prediction error across the five folds.

\paragraph{Results.} To quantify the magnitude of improvement relative to the previously reported supervised benchmarks \cite{gyaneshwar_et_al}, we calculated the relative reduction in normalized RMSE as \[ \mathrm{Reduction}(\%) = 100 \frac{ \mathrm{RMSE}_{\mathrm{benchmark}} - \mathrm{RMSE}_{\mathrm{\textit{p}SADE-GNR}} }{ \mathrm{RMSE}_{\mathrm{benchmark}} }. \] For ALM, the correlation-weighted raw-feature \(p\)SADE-GNR reduced normalized RMSE by \(32.7\%\), \(8.6\%\), and \(39.1\%\) relative to the strongest reported SVR or LSSVR benchmark in the male, female, and combined cohorts, respectively. For BFP, the corresponding reductions were \(28.4\%\), \(41.7\%\), and \(34.3\%\). BMD performance was broadly comparable with the published benchmark: normalized RMSE was reduced by \(4.9\%\) and \(3.2\%\) in the male and combined cohorts, respectively, but was \(7.5\%\) higher in the female cohort. Overall, the proposed configuration obtained numerically lower normalized RMSE in eight of the nine primary target--cohort comparisons, with the largest gains observed for ALM and BFP.

\begin{table}[H]
\centering
\caption{Average 5-fold $\mathrm{RMSE}_{\mathrm{relativepct}}$ are the percentage values across ALM, BMD, and BFP prediction tasks. The underlined values in bold indicate the lowest $\mathrm{RMSE}_{\mathrm{relativepct}}$ within each target and sex group.}
\renewcommand{\arraystretch}{1.35}
\setlength{\tabcolsep}{4pt}

\resizebox{\textwidth}{!}{
\begin{tabular}{l|l|cc|cc|cc|c|c|cc|}
\hline
\rowcolor{gray!15}
\textbf{Target} 
& \textbf{Sex}
& \multicolumn{2}{c|}{\textbf{$p$SADE-GNR}}
& \multicolumn{2}{c|}{\textbf{VAE-$p$SADE-GNR}}
& \multicolumn{2}{c|}{\textbf{GMVAE-$p$SADE-GNR}}
& \textbf{Autoencoder} 
&\textbf{Results from \cite{gyaneshwar_et_al}}
&\textbf{$p$GNN}\\

\rowcolor{gray!15}
& 
& $\mathbf{L^2}$ & $\mathbf{cc\text{-}L^2}$
& $\mathbf{L^2}$ & $\mathbf{cc\text{-}L^2}$
& $\mathbf{L^2}$ & $\mathbf{cc\text{-}L^2}$
& $\mathbf{L^2}$
& 
& \\
\hline

\multirow{3}{*}{ALM}
& Male   & 6.27\onestd{0.99}  & \uuline{\textbf{4.24}}\onestd{0.34} & 7.75\onestd{0.87} & 7.66\onestd{0.88} & 7.27\onestd{1.12} & 7.23\onestd{1.38} & 8.09\onestd{0.83} & 6.30 (SVR) & 11.86\onestd{1.68} \\
& Female & 8.49\onestd{1.31} & \uuline{\textbf{5.51}}\onestd{1.19} & 9.90\onestd{0.71} & 9.95\onestd{0.58}  & 8.95\onestd{1.04} & 8.65\onestd{1.33} & 10.19\onestd{0.64} & 6.03 (LSSVR) & 12.51\onestd{1.06}\\
& Combined   & 7.53\onestd{0.68} & \uuline{\textbf{4.77}}\onestd{0.42} & 9.85\onestd{0.81}  & 9.60\onestd{1.01} & 8.08\onestd{0.84} & 7.87\onestd{0.72} & 10.58\onestd{1.03}  & 7.83 (SVR) & 13.91\onestd{0.52} \\
\hline

\multirow{3}{*}{BMD}
& Male   & 6.84\onestd{0.47} & \uuline{\textbf{6.63}}\onestd{0.64} & 7.75\onestd{1.21}  & 7.85\onestd{1.48} & 6.83\onestd{0.73} & 6.95\onestd{0.63} & 7.93\onestd{0.97} &  6.97 (SVR) & 9.32\onestd{1.39}  \\
& Female & 7.95\onestd{1.11}  & 7.58\onestd{0.78}  & 8.99\onestd{1.13}  & 8.89\onestd{1.06}  & 8.15\onestd{1.25} & 7.99\onestd{0.85} & 9.31\onestd{0.77} & \uuline{\textbf{7.05}} (LSSVR) & 9.34\onestd{0.73} \\
& Combined   & 7.59\onestd{0.34} & \uuline{\textbf{7.24}}\onestd{0.51}  & 8.60\onestd{0.36} &  8.55\onestd{0.23}& 7.51\onestd{0.38} & 7.39\onestd{0.28} & 8.63\onestd{0.66} & 7.48 (SVR) & 9.31\onestd{0.45} \\
\hline

\multirow{3}{*}{BFP}
& Male   & 12.11\onestd{0.85}  & \uuline{\textbf{8.21}}\onestd{0.96} & 16.27\onestd{2.20} & 16.13\onestd{2.69} & 11.85\onestd{0.88}  & 12.41\onestd{0.81} & 16.79\onestd{2.30} & 11.47 (LSSVR) & 21.34\onestd{1.95} \\
& Female & 9.90\onestd{0.64}  & \uuline{\textbf{6.83}}\onestd{0.56} & 12.66\onestd{1.26}  & 12.71\onestd{1.15} & 9.99\onestd{0.44} & 9.83\onestd{0.69} & 12.50\onestd{1.08} & 11.72 (SVR) & 15.34\onestd{1.66} \\
& Combined   & 11.15\onestd{0.45} & \uuline{\textbf{7.22}}\onestd{0.53} & 14.62\onestd{0.56} & 14.26\onestd{0.87} & 11.15\onestd{0.45} & 11.12\onestd{0.29}  & 15.15\onestd{0.76} & 10.99 (LSSVR) & 17.77\onestd{1.04} \\
\hline
\end{tabular}
}

\label{tab:avgkfold_results_relative_RMSE}
\end{table}

\paragraph{Clinical relevance.} The magnitude of the observed error reductions may be practically important because ALM, BMD, and BFP are used in body-composition assessment, risk stratification, and longitudinal health evaluation. Errors in these quantities may be particularly consequential for individuals whose true measurements lie near clinically relevant decision thresholds. Based on the new normalized RMSE results, the correlation-weighted raw-feature \(p\)SADE-GNR achieved approximate original-scale RMSE values of \(0.99\), \(0.90\), and \(0.95\,\mathrm{kg}\) for ALM in the male, female, and combined cohorts, respectively. The corresponding values were approximately \(2.16\), \(2.42\), and \(2.27\) percentage points for BFP, and \(0.072\), \(0.077\), and \(0.076\,\mathrm{g/cm^2}\) for BMD. These values correspond to overall ranges of \(0.90\)--\(0.99\,\mathrm{kg}\) for ALM, \(2.16\)--\(2.42\) percentage points for BFP, and \(0.072\)--\(0.077\,\mathrm{g/cm^2}\) for BMD across the evaluated cohorts. Nevertheless, the present study evaluates continuous measurement estimation rather than diagnostic accuracy. The proposed model should therefore be interpreted as a potential non-invasive screening or measurement-support tool rather than as a replacement for DXA or a standalone diagnostic system. Threshold-specific, externally validated, and prospective analyses are required before clinical deployment.

\begin{table}[H]
\centering
\caption{Average 5-fold $\mathrm{RMSE}_{\mathrm{orig}}$ values for the same $p$SADE-GNR-based models on ALM, BMD, and BFP as those reported in Table~\ref{tab:avgkfold_results_relative_RMSE}. The values are expressed in the original units of the corresponding target variables. The underlined values indicate the lowest $\mathrm{RMSE}_{\mathrm{orig}}$ within each sex group.}

\renewcommand{\arraystretch}{1.15}
\setlength{\tabcolsep}{4pt}

\resizebox{\textwidth}{!}{
\begin{tabular}{l|l|cc|cc|cc|c|cc}
\hline
\rowcolor{gray!15}
\textbf{Target} 
& \textbf{Sex}
& \multicolumn{2}{c|}{\textbf{$p$SADE-GNR}}
& \multicolumn{2}{c|}{\textbf{VAE-$p$SADE-GNR}}
& \multicolumn{2}{c|}{\textbf{GMVAE-$p$SADE-GNR}}
& \textbf{Autoencoder} 
& \textbf{$p$GNN} \\

\rowcolor{gray!15}
& 
& $\mathbf{L^2}$ & $\mathbf{cc\text{-}L^2}$
& $\mathbf{L^2}$ & $\mathbf{cc\text{-}L^2}$
& $\mathbf{L^2}$ & $\mathbf{cc\text{-}L^2}$
& $\mathbf{L^2}$ 
& \\
\hline

\multirow{3}{*}{ALM ($kg$)}
& Male     & 1.46\onestd{0.26} & \uline{0.99}\onestd{0.10}  & 1.81\onestd{0.24} & 1.79\onestd{0.25} & 1.69\onestd{0.28} & 1.68\onestd{0.34} & 1.88\onestd{0.20} & 2.76\onestd{0.43}\\
& Female   & 1.39\onestd{0.22} & \uuline{0.90}\onestd{0.20} & 1.62\onestd{0.13} & 1.62\onestd{0.09} & 1.46\onestd{0.20} & 1.41\onestd{0.24} & 1.66\onestd{0.09} & 2.04\onestd{0.18} \\
& Combined & 1.50\onestd{0.13} & \uuline{0.95}\onestd{0.07} & 1.96\onestd{0.17} & 1.91\onestd{0.20} & 1.61\onestd{0.14} & 1.57\onestd{0.11} & 2.11\onestd{0.20} & 2.77\onestd{0.13} \\
\hline

\multirow{3}{*}{BMD ($\mathrm{gm/cm^2}$)}
& Male     & \uuline{0.07}\onestd{0.01} & \uuline{0.07}\onestd{0.01} & 0.08\onestd{0.01} & 0.09\onestd{0.02} & \uuline{0.07}\onestd{0.01} & 0.08\onestd{0.01} & 0.09\onestd{0.01} & 0.10\onestd{0.01} \\
& Female   & \uuline{0.08}\onestd{0.01} & \uuline{0.08}\onestd{0.01} & 0.09\onestd{0.01} & 0.09\onestd{0.01} & \uuline{0.08}\onestd{0.01} & \uuline{0.08}\onestd{0.01} & 0.09\onestd{0.01} & 0.09\onestd{0.01} \\
& Combined & \uuline{0.08}\onestd{0.00} & \uuline{0.08}\onestd{0.01} & 0.09\onestd{0.00} & 0.09\onestd{0.00} & \uuline{0.08}\onestd{0.00} & \uuline{0.08}\onestd{0.00} & 0.09\onestd{0.01} & 0.10\onestd{0.00} \\
\hline

\multirow{3}{*}{BFP ($\%$) }
& Male     & 3.18\onestd{0.19} & \uuline{2.16}\onestd{0.24} & 4.28\onestd{0.54} & 4.24\onestd{0.66} & 3.12\onestd{0.20} & 3.27\onestd{0.18} & 4.43\onestd{0.66} & 5.62\onestd{0.54}\\
& Female   & 3.51\onestd{0.21} & \uuline{2.42}\onestd{0.24} & 4.48\onestd{0.34} & 4.50\onestd{0.29} & 3.55\onestd{0.22} & 3.49\onestd{0.23} & 4.43\onestd{0.29} & 5.43\onestd{0.50}\\
& Combined & 3.51\onestd{0.23} & \uuline{2.27}\onestd{0.17} & 4.60\onestd{0.21} & 4.49\onestd{0.31} & 3.51\onestd{0.18} & 3.50\onestd{0.12} & 4.77\onestd{0.30} & 5.58\onestd{0.19}\\
\hline

\end{tabular}
}

\label{tab:combined_results_original_RMSE}
\end{table}

\subsection{Results for Age using \texorpdfstring{RMSE\textsubscript{orig}}{RMSE orig}}


\begin{table}[H]
\centering
\caption{Average 5 fold $\mathrm{RMSE}_{orig}$ values across Age prediction tasks. The underlined values indicate the lowest RMSE within each target and sex group.}

\renewcommand{\arraystretch}{1.15}
\setlength{\tabcolsep}{4pt}

\resizebox{\textwidth}{!}{
\begin{tabular}{l|l|cc|cc|cc|c|c}
\hline
\rowcolor{gray!15}
\textbf{Target} 
& \textbf{Sex}
& \multicolumn{2}{c|}{\textbf{$p$SADE-GNR}}
& \multicolumn{2}{c|}{\textbf{VAE-$p$SADE-GNR}}
& \multicolumn{2}{c|}{\textbf{GMVAE-$p$SADE-GNR}}
& \textbf{Autoencoder} 
& \textbf{$p$GNN} \\

\rowcolor{gray!15}
& 
& $\mathbf{L^2}$ & $\mathbf{cc\text{-}L^2}$
& $\mathbf{L^2}$ & $\mathbf{cc\text{-}L^2}$
& $\mathbf{L^2}$ & $\mathbf{cc\text{-}L^2}$
& $\mathbf{L^2}$ 
& \\
\hline
 
\multirow{3}{*}{\shortstack{Age (years) \\ with ALM, BMD, BFP}}
& Male   & 9.84\onestd{1.54} & 9.08\onestd{1.41} & 11.44\onestd{2.28} & 10.94\onestd{2.03} & 8.01\onestd{2.00} & \uuline{7.99}\onestd{1.96} & 10.02\onestd{1.60} & 13.75\onestd{0.94} \\
& Female & 11.58\onestd{0.95} & 11.73\onestd{1.28} & 14.67 \onestd{1.13}& 15.42\onestd{1.77} & 11.34\onestd{1.42} & \uuline{10.82}\onestd{1.62} & 14.75\onestd{1.44} & 16.72\onestd{1.93} \\
& Combined   & 11.13\onestd{1.00}  & 11.26\onestd{0.88} & 13.50\onestd{0.75}  & 13.29\onestd{0.65}  & 11.12\onestd{1.00} & \uuline{10.90}\onestd{1.17} & 13.82\onestd{1.74}& 15.25\onestd{0.93}\\
\hline

\multirow{3}{*}{\shortstack{Age (years) \\ without ALM, BMD, BFP }}
& Male   & 9.84\onestd{1.55}  & \uuline{9.08}\onestd{1.44}  & 11.57\onestd{2.21} & 10.93\onestd{1.84} & 9.58\onestd{2.52}  & 9.44\onestd{2.57} & 10.48\onestd{1.30} & 13.69\onestd{1.01} \\
& Female & 11.53\onestd{1.05} & 11.73\onestd{1.26} & 14.69\onestd{1.24} & 15.45\onestd{1.10} & 11.36\onestd{1.72} & \uuline{11.17}\onestd{1.11} & 14.67\onestd{1.39} & 16.38\onestd{2.29}\\
& Combined   & \uuline{11.07}\onestd{1.06} & 11.26\onestd{0.90} & 13.46\onestd{0.71} & 13.29 \onestd{0.74}& 11.39\onestd{1.45}  & 11.54\onestd{0.94} & 13.45\onestd{1.13}& 15.61\onestd{1.46} \\
\hline
\end{tabular}
}

\label{tab:combined_results_RMSE_orig_age}
\end{table}
Age prediction (Table \ref{tab:combined_results_RMSE_orig_age}) was evaluated as an exploratory task under two predictor settings: with and without the DXA-derived ALM, BMD, and BFP measurements. 
When ALM, BMD, and BFP were included, the correlation-weighted GMVAE-pSADE-GNR achieved the lowest RMSE in all three cohorts, with values of $7.99$, $10.82$, and $10.90$ years for the Male, Female, and Combined cohorts, respectively. 
When these variables were excluded, the best-performing configuration varied across cohorts. 
The correlation-weighted raw-feature $p$SADE-GNR achieved the lowest RMSE for Males ($9.08$ years), the correlation-weighted GMVAE-$p$SADE-GNR achieved the lowest RMSE for Females ($11.17$ years), and the raw-feature $L^2$-$p$SADE-GNR achieved the lowest RMSE for the Combined cohort ($11.07$ years). 
These results indicate that including ALM, BMD, and BFP was particularly beneficial for the GMVAE-based Age models, whereas the raw-feature $p$SADE-GNR results changed very little between the two predictor settings.

\section{Discussion}\label{sec:discussion} 

The central result of this study is that the target-aware $p$SADE-GNR constructed from the original anthropometric measurements substantially improved the prediction of ALM, BMD, and BFP. The correlation-weighted raw-feature $p$SADE-GNR achieved the lowest cross-validated RMSE among all evaluated configurations in each of the nine primary target--cohort settings. Relative to the corresponding unweighted $p$SADE-GNR, target-aware graph construction reduced relative RMSE by approximately $7\%$--$40\%$, with the largest improvements observed for ALM and BFP (Table~\ref{tab:avgkfold_results_relative_RMSE}). The same model also obtained numerically lower normalized RMSE than the strongest published SVR or LSSVR reference value in eight of the nine comparisons \cite{gyaneshwar_et_al}. In original units, its RMSE was approximately $0.90$--$0.99\,\mathrm{kg}$ for ALM, $2.16$--$2.42$ percentage points for BFP, and $0.07$--$0.08\,\mathrm{gm/cm^2}$ for BMD. These results provide strong evidence that participant-similarity modeling through the proposed target-aware $p$SADE-GNR is effective for estimating body-composition outcomes from non-invasive anthropometric measurements.

The consistent improvement produced by correlation weighting shows that graph construction is a decisive component of the proposed framework. A standard Euclidean graph treats all standardized measurements equally, whereas the target-aware distance emphasizes features that are more strongly associated with the outcome in the training data. Because ALM, BMD, and BFP exhibit different feature--target correlation profiles (Figure~\ref{fig:clock}), a single fixed definition of participant similarity is not equally informative for all three prediction tasks. The lower RMSE of the correlation-weighted model in every primary target--cohort comparison demonstrates that adapting the graph to the target provides a clear predictive advantage. Although the correlation weights are marginal associations rather than causal or conditional importance measures, they provide a simple, transparent, and empirically effective mechanism for constructing outcome-specific participant graphs.

In contrast, GMVAE was beneficial for exploratory age prediction when
DXA-derived ALM, BMD, and BFP were included as predictors. The
GMVAE-cc-\(L^2\)--\(p\)SADE-GNR achieved the lowest mean RMSE in the male,
female, and combined cohorts (\(7.99\), \(10.82\), and \(10.90\) years,
respectively) and was best in four of the six age cohort--setting
comparisons. Its advantage was less consistent in the anthropometry-only
setting, where it remained best only for females. These results suggest
that GMVAE has a task-dependent benefit for modeling age from combined
anthropometric and body-composition information, although it did not
improve the primary ALM, BMD, and BFP predictions. Because no formal
paired significance analysis was performed, these comparisons should be
interpreted cautiously.

Several limitations remain. First of all, the present experiments establish the performance of the complete target-aware $p$SADE-GNR pipeline, but they do not separately isolate the contributions of graph propagation and nonlinear $p$-Laplacian diffusion through the graph constructed here. Secondly, we are still unable to give an explanation why our method gives a worse result than LSSVR for the female BMD predictions. In addition, the data were obtained from a single research center and evaluated through internal five-fold cross-validation, without external validation or individual prediction intervals. Future work should therefore examine independent and prospective cohorts, compare the proposed model directly with $p$-Laplacian diffusion baseline and other regression methods on independent datasets, quantify predictive uncertainty, and improve the efficiency of graph construction and hyperparameter selection. Integration with automated anthropometric extraction from 3D body scans is another natural direction. Overall, the findings support target-aware $p$SADE-GNR modeling as a promising approach for non-invasive body-composition estimation and show that its principal advantage arises from outcome-specific participant graph construction rather than from generic lower-dimensional representation learning.

\subsection*{Acknowledgements}
The authors are especially grateful to Professor Peter Wolenski for collaboratively initiating this project and for many valuable discussions throughout its development. 

 This work was supported in part by National Science Foundation Award No.2407839 and by the National Institutes of Health through Nutrition Obesity Research Center grants P30DK072476 (Pennington/Louisiana) and P30DK040561 (Harvard), as well as research grant R01DK109008 (Shape Up! Adults).

\bibliography{main_references_revise} \label{sec:References}
\bibliographystyle{plain}
\bigskip




\newpage

\appendix
\section{Supplementary Materials}\label{app:mapper_umap}

 \subsection{Proofs of Theoretical Results}
\subsubsection{Proof of
Proposition~\ref{prop:target_aware_linear_guarantee}}
\label{app:target_aware_linear_guarantee}

\begin{proof}
Let

$$
S=\{\ell:\beta_\ell\ne0\},
\qquad
A(b)=\sum_{\ell\in S}\frac{\beta_\ell^2}{b_\ell},
$$

where \(A(b)=\infty\) if \(b_\ell=0\) for some \(\ell\in S\).
In that case, \(K(b)=A(b)=\infty\) by convention. Hence, assume
\(b_\ell>0\) for all \(\ell\in S\).

For every \(\delta\in\mathbb{R}^m\) with
\(\sum_{\ell=1}^m b_\ell\delta_\ell^2>0\), Cauchy--Schwarz gives

$$
|\beta^\top\delta|^2
\leq
A(b)\sum_{\ell\in S}b_\ell\delta_\ell^2
\leq
A(b)\sum_{\ell=1}^m b_\ell\delta_\ell^2.
$$

Thus \(K(b)^2\leq A(b)\). Equality is attained by taking
\(\delta_\ell=\beta_\ell/b_\ell\) for \(\ell\in S\) and
\(\delta_\ell=0\) otherwise. Therefore,

$$
K(b)^2=A(b)
=\sum_{\ell:\beta_\ell\ne0}\frac{\beta_\ell^2}{b_\ell}.
$$

Applying Cauchy--Schwarz once more,

$$
\|\beta\|_1^2
\leq
A(b)\sum_{\ell\in S}b_\ell
\leq A(b)=K(b)^2,
$$

because \(\sum_{\ell\in S}b_\ell\leq1\). Equality holds precisely when
all weight is supported on \(S\) and
\(b_\ell\propto|\beta_\ell|\) on \(S\). The simplex constraint then
uniquely determines

$$
b_\ell^\star
=
\frac{|\beta_\ell|}{\|\beta\|_1},
\qquad
K(b^\star)=\|\beta\|_1.
$$

Since \(|\beta_\ell|=c|\rho_\ell|\),

$$
b_\ell^\star
=
\frac{|\rho_\ell|}{\|\rho\|_1}.
$$

For \(b_\ell^{\mathrm{Euc}}=1/m\),
\(K(b^{\mathrm{Euc}})=\sqrt m\,\|\beta\|_2\). Hence

$$
\frac{K(b^\star)}{K(b^{\mathrm{Euc}})}
=
\frac{\|\rho\|_1}{\sqrt m\,\|\rho\|_2}
\leq1,
$$

by Cauchy--Schwarz. Equality holds if and only if all
\(|\rho_\ell|\) are equal.
\end{proof}

\subsubsection{Proof of Theorem \ref{thm:shrinking}}\label{proof:shrinking}
\begin{proof}
Let $g=\nabla\E_p(H)$ and $\delta=\delta(H)>0$. Recall that
\begin{equation}
(\nabla\E_p(H))_i
=
\sum_{j\in\mathcal N(i)}
w_{ij}\lVert h_i-h_j\rVert_2^{p-2}(h_i-h_j).
\label{eq:gradient}
\end{equation}

From
\eqref{eq:gradient},
\begin{equation}
\lVert g_i\rVert_2
\le
\sum_j w_{ij}\delta^{p-1}
\le d_w\delta^{p-1}.
\label{eq:node-gradient-bound}
\end{equation}
Put $\tau=\mu\delta^{2-p}$ and consider the Euler segment
$H_s=H-s\tau g$, $0\le s\le1$.  For every edge $\{i,j\}$,
\begin{align}
\lVert (H_s)_i-(H_s)_j\rVert_2
&\le
\lVert h_i-h_j\rVert_2
+s\tau(\lVert g_i\rVert_2+\lVert g_j\rVert_2) \\
&\le
\delta(1+2\mu d_w)
=R_\mu\delta.
\label{eq:segment-edge-bound}
\end{align}

For $\varphi(z)=\lVert z\rVert_2^p/p$, the Hessian has operator norm at
most $(p-1)\lVert z\rVert_2^{p-2}$.  Orienting the edges and using the
incidence matrix, the Hessian quadratic form of $\E_p$ therefore obeys, for
every $U\in\R^{n\times r}$,
\begin{align}
\ip{U}{\nabla^2\E_p(H_s)U}
&\le
(p-1)(R_\mu\delta)^{p-2}
\sum_{\{i,j\}\in E}w_{ij}\lVert u_i-u_j\rVert_2^2 \\
&\le
(p-1)(R_\mu\delta)^{p-2}\Lambda_w\lVert U\rVert_F^2.
\label{eq:hessian-bound}
\end{align}
Taylor's theorem along the segment, with
$L=(p-1)(R_\mu\delta)^{p-2}\Lambda_w$, now yields
\begin{align}
\E_p(H-\tau g)
&\le
\E_p(H)-\tau\lVert g\rVert_F^2
+\frac{L\tau^2}{2}\lVert g\rVert_F^2 \\
&=
\E_p(H)-
\mu\delta^{2-p}
\left[1-\frac{\mu(p-1)\Lambda_wR_\mu^{p-2}}{2}\right]
\lVert g\rVert_F^2,
\end{align}
which is \eqref{eq:quantitative-descent}.  Finally,
$\ip{\nabla\E_p(H)}{H}=p\E_p(H)$, so a nonconstant state on a
positive-weight component has nonzero gradient.
\end{proof}

\subsection{MAPPER}\label{sec:mapper}
\begin{figure}[H]
    \centering
    \includegraphics[width = 0.35\linewidth]{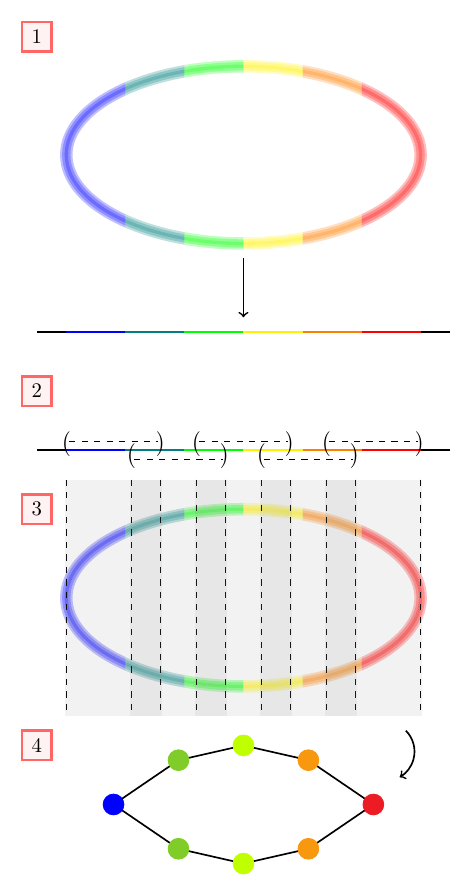}
    \caption{Toy illustration of the MAPPER algorithm. A filter function is chosen to project the dataset, overlapping regions are formed in the projected space, clustering is performed within each preimage, and clusters with shared data points are connected in the resulting graph.}
    \label{fig:mapper}
\end{figure}
The MAPPER algorithm is a method for projecting and visualizing high-dimensional data \cite{Mapper}. Its purpose is to summarize clustering patterns while retaining information about the topological structure of the dataset. The algorithm begins by choosing a \textit{filter function} that maps the data to a lower-dimensional space. The range of this filter function is then covered by overlapping intervals or regions, and a clustering algorithm is applied within the pre-image of each region. Principal Component Analysis was used here to construct the filter function using the primary and secondary axes of the data. 

The output of MAPPER is a graph. Vertices represent clusters, and two vertices are connected by an edge if the corresponding clusters share at least one data point. Vertex size can also be scaled according to the number of samples in each cluster, giving a compact visualization of both local density and overlap structure. Figure~\ref{fig:mapper} illustrates this construction using a simple toy example. In this project, the MAPPER algorithm was used as a visualization tool to see any significant topological features before and after the different dimension reduction algorithms were used.

\subsection{Uniform Manifold Approximation and Projection (UMAP)}\label{app:umap_background}

Uniform Manifold Approximation and Projection (UMAP) is a nonlinear manifold learning method for constructing low-dimensional embeddings of high-dimensional data \cite{mcinnes2020umap}. UMAP first builds a weighted neighborhood graph in the original feature space and then optimizes a low-dimensional embedding whose neighborhood structure approximates that of the original data. The method is based on local manifold approximations, fuzzy simplicial sets, and a cross-entropy objective between high-dimensional and low-dimensional topological representations.

Two important UMAP parameters are $n_{\mathrm{neighbors}}$ and \texttt{min\_dist}. The parameter $n_{\mathrm{neighbors}}$ controls the size of the local neighborhoods used to approximate the data manifold. Smaller values emphasize more local structure, while larger values incorporate broader global structure. The parameter \texttt{min\_dist} controls how tightly points are allowed to pack together in the low-dimensional embedding; smaller values allow denser clusters, while larger values produce more spread-out embeddings. In this paper, UMAP is used only as an exploratory visualization tool. The experiment-specific UMAP settings and ALM-colored visualizations are reported in Section \ref{subsec:umap_results}.
\subsection{Computational Costs}
\begin{table}[H]
\centering
\caption{Average computational cost per cross-validation fold for the $p$SADE-GNR-based methods for the combined three datasets: Male, Female, and Combined. The reported runtime represents the mean wall-clock time required to complete one fold of the complete hyperparameter search across the Male, Female, and Combined datasets. All proposed methods were executed on the same CUDA GPU computing environment, while the SVR and LSSVR runtimes are taken from \cite{gyaneshwar_et_al}, where experiments were performed on a CPU.}
\label{tab:computational_costkfold}
\renewcommand{\arraystretch}{1.35}
\setlength{\tabcolsep}{4pt}

\resizebox{\textwidth}{!}{
\begin{tabular}{l|cc|cc|cc|cc|cc}
 \hline
\rowcolor{gray!15}
\textbf{Runtime Measure}
& \multicolumn{2}{c|}{\textbf{$p$SADE-GNR}}
& \multicolumn{2}{c|}{\textbf{VAE-$p$SADE-GNR}}
& \multicolumn{2}{c|}{\textbf{GMVAE-$p$SADE-GNR}}
& \multicolumn{2}{c|}{\textbf{Autoencoder}}
& \multicolumn{2}{c}{\textbf{Results from \cite{gyaneshwar_et_al}}}\\

\rowcolor{gray!15}
&$\mathbf{L^2}$ & $\mathbf{cc\text{-}L^2}$
& $\mathbf{L^2}$ & $\mathbf{cc\text{-}L^2}$
& $\mathbf{L^2}$ & $\mathbf{cc\text{-}L^2}$
& $\mathbf{L^2}$ & 
& \textbf{SVR} & \textbf{LSSVR} \\
 \hline

Device 
& CUDA GPU & CUDA GPU 
& CUDA GPU & CUDA GPU 
& CUDA GPU & CUDA GPU 
& CUDA GPU &
& CPU & CPU \\

Total Runtime
& 1h 26m 22s & 10h 45m 35s
& 2h 07m 07s & 13h 55m 50s
& 1h 27m 08s & 11h 06m 52s
& 1h 6m &
& 1m 50s & 52.2s \\

Runtime (s)
& 5122.68s & 38735.39s
& 7451.77s & 49670.56s
& 5228.16s  & 39558.12s 
& 3665.96 &
& 110s & 52.2s \\
\hline

\end{tabular}
}
\end{table}
\newpage
\subsection{Biomarkers}

\begin{table}[ht]
    \centering
    \begin{tabular}{|l|l|}
        \hline
        \rowcolor{gray!15} Biomarker & Unit of Measure\\
         \hline
        Age & Years \\
        Gender & Male/Female \\
        ALM & Kilograms (kg) \\
        BMD & Grams / Square Centimeters (gm/cm$^2$)\\
        BFP & Percentage (\%) \\
        Height & Centimeters (cm) \\
        Weight & Kilograms (kg) \\
        Abdomen Circumference & Centimeters (cm) \\
        Ankle Circumference (left/right) & Centimeters (cm) \\
        Arm Length & Centimeters (cm) \\
        Arm Volume (left/right) & Cubic Centimeters (cm$^3$) \\
        Bicep Circumference (left/right) & Centimeters (cm) \\
        Calf Circumference (left/right) & Centimeters (cm) \\
        Chest & Centimeters (cm) \\
        Collar Circumference & Centimeters (cm) \\
        Forearm Circumference (left/right) & Centimeters (cm) \\
        Head Circumference & Centimeters (cm) \\
        Hip Circumference & Centimeters (cm) \\
        Horizontal Waist & Centimeters (cm) \\
        Inseam (left/right) & Centimeters (cm) \\
        Leg Volume (left/right)  & Cubic Centimeters (cm$^3$) \\
        Mid Thigh Circumference (left/right) & Centimeters (cm) \\
        Narrow Waist & Centimeters (cm) \\
        Outside Leg Length (left/right) & Centimeters (cm) \\
        Seat Circumference & Centimeters (cm) \\
        Surface Area Arm (left/right) & Square Centimeters (cm$^2$) \\
        Surface Area Leg (left/right) & Square Centimeters (cm$^2$) \\
        Surface Area Torso & Square Centimeters (cm$^2$) \\
        Surface Area Total & Square Centimeters (cm$^2$) \\
        Thigh Circumference (left/right) & Centimeters (cm) \\
        Torso Volume & Cubic Centimeters (cm$^3$) \\
        Upper Arm Circumference (left/right) & Centimeters (cm) \\
        Volume & Cubic Centimeters (cm$^3$) \\
        Waist Circumference & Centimeters (cm) \\
        \hline
    \end{tabular}
    \caption{This is a table outlining all of the biomarkers used along with the units of measure.}
    \label{tab:biomarkers_table}
\end{table}

\subsection{Correlations}
\begin{longtable}{l|r|r|r|r|}
 
\caption{Feature correlations with ALM, BMD, BFP, and Age.} \\
 \toprule
\rowcolor{gray!15}\textbf{Feature} & \textbf{ALM} & \textbf{BMD} & \textbf{BFP} & \textbf{Age} \\
 \midrule
\endfirsthead

 \toprule
\textbf{Feature} & \textbf{ALM} & \textbf{BMD} & \textbf{BFP} & \textbf{Age} \\
 \midrule
\endhead

 \midrule
\multicolumn{5}{r}{\textit{Continued on next page}} \\
\endfoot

 \bottomrule
\endlastfoot

Height (cm) & 0.8504 & 0.8043 & 0.1119 & 0.4559 \\
Weight (kg) & 0.8879 & 0.7422 & 0.2980 & 0.5299 \\
Abdomen Circumference & 0.6765 & 0.5963 & 0.5589 & 0.5727 \\
Ankle Circumference Left & 0.7226 & 0.4857 & 0.2241 & 0.2228 \\
Arm Length Left & 0.7829 & 0.7495 & 0.0543 & 0.3971 \\
Arm Volume Left & 0.8605 & 0.7584 & 0.1514 & 0.4827 \\
Bicep Circumference Left & 0.7697 & 0.6637 & 0.4154 & 0.5098 \\
Calf Circumference Left & 0.7991 & 0.6684 & 0.3579 & 0.4455 \\
Chest & 0.8108 & 0.6895 & 0.3727 & 0.5771 \\
Collar Circumference & 0.8226 & 0.6321 & 0.0848 & 0.4661 \\
Forearm Circumference Left & 0.8437 & 0.6998 & 0.2058 & 0.4621 \\
Head Circumference & 0.3699 & 0.3536 & 0.1258 & 0.0715 \\
Hip Circumference & 0.7154 & 0.6748 & 0.5046 & 0.5331 \\
Horizontal Waist & 0.6704 & 0.5571 & 0.5661 & 0.5789 \\
Inseam Left & 0.7237 & 0.6956 & 0.0991 & 0.3635 \\
Leg Volume Left & 0.8214 & 0.7000 & 0.3195 & 0.3986 \\
MidThigh Circumference Left & 0.7065 & 0.6220 & 0.4586 & 0.3682 \\
Narrow Waist & 0.7458 & 0.5880 & 0.4420 & 0.5853 \\
Outside Leg Length Left & 0.7562 & 0.7202 & 0.0171 & 0.4024 \\
Seat Circumference & 0.7099 & 0.6652 & 0.5103 & 0.5392 \\
Surface Area Arm Left & 0.9045 & 0.7952 & 0.0862 & 0.4609 \\
Surface Area Leg Left & 0.8652 & 0.7818 & 0.1815 & 0.4760 \\
Surface Area Torso & 0.8749 & 0.7675 & 0.2284 & 0.5883 \\
Surface Area Total & 0.9100 & 0.8062 & 0.1863 & 0.5300 \\
Thigh Circumference Left & 0.7252 & 0.6556 & 0.4789 & 0.3974 \\
Torso Volume & 0.8334 & 0.6963 & 0.3496 & 0.5762 \\
Upper Arm Circumference Left & 0.8023 & 0.6931 & 0.3923 & 0.5249 \\
Volume & 0.8652 & 0.7300 & 0.3334 & 0.5502 \\
Waist Circumference & 0.6550 & 0.5753 & 0.5766 & 0.5887 \\
Ankle Circumference Right & 0.6999 & 0.4589 & 0.2057 & 0.1752 \\
Arm Length Right & 0.8030 & 0.7420 & 0.0669 & 0.3874 \\
Arm Volume Right & 0.8590 & 0.7646 & 0.1193 & 0.4986 \\
Bicep Circumference Right & 0.8047 & 0.6970 & 0.3602 & 0.4955 \\
Calf Circumference Right & 0.7966 & 0.6660 & 0.3587 & 0.4447 \\
Forearm Circumference Right & 0.8721 & 0.7369 & 0.1582 & 0.4729 \\
Inseam Right & 0.7248 & 0.6979 & 0.0955 & 0.3676 \\
Leg Volume Right & 0.8224 & 0.6978 & 0.3174 & 0.3914 \\
MidThigh Circumference Right & 0.7123 & 0.6250 & 0.4479 & 0.3669 \\
Outside Leg Length Right & 0.7561 & 0.7168 & 0.0221 & 0.3910 \\
Surface Area Arm Right & 0.9024 & 0.7937 & 0.0695 & 0.4556 \\
Surface Area Leg Right & 0.8692 & 0.7730 & 0.1866 & 0.4631 \\
Thigh Circumference Right & 0.7275 & 0.6578 & 0.4753 & 0.3965 \\
Upper Arm Circumference Right & 0.8238 & 0.7062 & 0.3626 & 0.5087 \\
\label{corr_all}
\end{longtable}

\end{document}